\documentclass{article}

\usepackage[preprint]{iquest_med}

\setcitestyle{numbers,square,comma}

\usepackage[utf8]{inputenc}
\usepackage[T1]{fontenc}
\usepackage{amsmath}
\usepackage{amsfonts}
\usepackage{amssymb}
\usepackage{amsthm}
\usepackage{mathtools}
\usepackage{bm}
\usepackage{booktabs}
\usepackage{array}
\usepackage{tabularx}
\usepackage{microtype}
\usepackage{url}
\usepackage{graphicx}
\usepackage[most]{tcolorbox}
\usepackage{xcolor}
\usepackage{algorithm}
\usepackage{algpseudocode}
\usepackage{enumitem}
\usepackage{placeins}
\usepackage{longtable}

\definecolor{darkblue}{rgb}{0.0, 0.0, 0.55}
\definecolor{motivationbg}{RGB}{255,241,224}

\newtcolorbox{boeanchor}[1]{
  enhanced jigsaw,
  width=\linewidth,
  colback=IQuestDeepBlue!3!white,
  colframe=IQuestBlue,
  boxrule=0pt,
  leftrule=1.2pt,
  arc=0.8mm,
  outer arc=0.8mm,
  left=7pt,
  right=7pt,
  top=5pt,
  bottom=5pt,
  before skip=8pt plus 2pt minus 2pt,
  after skip=8pt plus 2pt minus 2pt,
  before upper={\textcolor{IQuestBlue}{\small\sffamily\bfseries #1}\par\smallskip}
}

\newtcolorbox{boetakeaway}{
  enhanced jigsaw,
  width=\linewidth,
  colback=black!1,
  colframe=IQuestBlue!65!black,
  boxrule=0pt,
  leftrule=0.8pt,
  sharp corners,
  left=6pt,
  right=4pt,
  top=3pt,
  bottom=3pt,
  before skip=5pt,
  after skip=7pt,
  before upper={\small\textcolor{IQuestBlue}{\sffamily\bfseries Exact takeaway. }\ignorespaces}
}

\usepackage{hyperref}
\hypersetup{
    colorlinks=true,
    linkcolor=darkblue,
    citecolor=darkblue,
    urlcolor=darkblue,
}

\theoremstyle{plain}
\newtheorem{theorem}{Theorem}[section]
\newtheorem{proposition}[theorem]{Proposition}

\newtheorem{corollary}[theorem]{Corollary}
\theoremstyle{definition}
\newtheorem{definition}[theorem]{Definition}
\newtheorem{assumption}[theorem]{Assumption}

\theoremstyle{remark}

\newcommand{\E}{\mathbb{E}}
\newcommand{\Prob}{\mathbb{P}}

\newcommand{\ind}{\mathbf{1}}

\newcommand{\MI}{\operatorname{I}}
\newcommand{\EVSI}{\operatorname{EVSI}}

\newcommand{\argmax}{\operatorname*{arg\,max}}

\newcolumntype{Y}{>{\raggedright\arraybackslash}X}

\title{Best-of-Evidence: Best-of-N Selection under Partial Verification}

\author{
  \textbf{
    Cenwei Zhang\affmark{1$\dagger$}
    \quad Teng Fang\affmark{1}
    \quad Yuxia Wang\affmark{2}
    \quad Derek Li\affmark{1}
    \quad Bryan Dai\affmark{1$\ddagger$}
    \quad Lei You\affmark{3$\ddagger$}
  }\\[0.6ex]
  \affmark{1}IQuest Research
  \quad
  \affmark{2}INSAIT
  \quad
  \affmark{3}Technical University of Denmark\\[0.6ex]
  \texttt{cwzhang2001@gmail.com}
  \quad
  \texttt{yuxia.wang@insait.ai}\\[0.4ex]
  \texttt{\{tfang,jdli,cbdai\}@iquestlab.com}
  \quad
  \texttt{leiyo@dtu.dk}\\[2.0ex]
  \affmark{$\dagger$}Work done during an internship at IQuest Research.
  \affmark{$\ddagger$}Corresponding authors.
}

\begin{document}

\maketitle

\begin{abstract}

BoN improves model outputs by sampling several candidates and selecting one with a proxy score, but it assumes that complete candidates can be evaluated reliably. Many vision-language tasks instead provide only partial verification: a finding, span, value, region, or relation may be checkable even when no dependable whole-response verifier exists. Moreover, the same claim may recur across candidates with opposing stances, allowing one observation to support part of the pool and contradict another. We introduce \emph{Best-of-Evidence} (BoE), an inference-time selection framework that keeps the BoN candidate pool fixed, represents reusable claims with a signed candidate--factor graph, and allocates a limited budget to evidence actions that can change the final choice. BoE formalizes selection under partial verification and provides a practical score-based controller, with the zero-budget case recovering the underlying BoN decision. Theoretically, we show that residual evidence capacity limits any evidence-driven improvement and that shared factor queries can achieve an $O(\log K)$ versus $\Theta(K)$ query separation in a factor-code model. Common-ledger experiments on four medical VQA settings show that BoE can improve fixed-pool selection and rescue some BoN failures when evidence is reliable, contrastive, and decision-relevant, while also revealing the channel-quality and candidate-generation limits that prevent universal gains.

\end{abstract}

\begin{figure}[h]
\centering
\includegraphics[width=\linewidth]{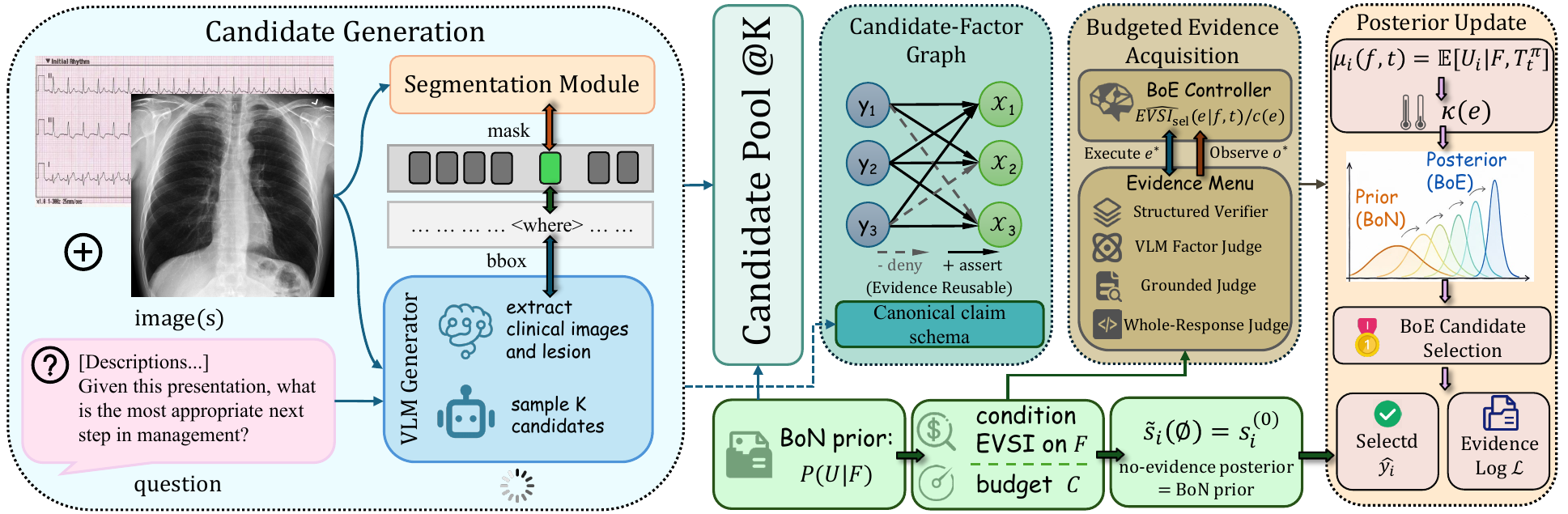}
\caption{\textbf{Overview of Best-of-Evidence under partial verification.}
From left to right, a VLM generates a fixed candidate pool, with segmentation
used only to prepare grounded views. The cheap BoN context initializes the
ranking. Candidate claims are merged into a signed candidate--factor graph,
allowing one observation to affect multiple candidates in different
directions. Under a budget, BoE selects among heterogeneous checks, updates
the ranking with the acquired evidence, and returns the selected candidate
together with an evidence log.}
\label{fig:boe_overview}
\end{figure}

\section{Introduction}

Language models often produce several plausible solutions even when a single
decode is unreliable. When changing or retraining the generator is costly,
sampling multiple attempts and selecting among them offers a simple way to
trade test-time compute for output quality
\citep{wang2023selfconsistencyimproveschainthought,cobbe2021trainingverifierssolvemath,
brown2024largelanguagemonkeysscaling,snell2024scalingllmtesttimecompute}.
\emph{Best-of-$N$} (BoN) follows this pattern: it samples $K$ candidates,
scores each with a proxy, and returns the highest-scoring one. The proxy may
be a majority vote, reward model, verifier, process reward model, or another
judge
\citep{lightman2023letsverifystepstep,zheng2023judgingLLM-as-a-Judge,
wang2025visualprm}. BoN is attractive because it leaves the generator fixed
and spends additional compute only on test-time generation and selection.

The limitation is the unit of verification. Standard outcome-level BoN
assumes that each complete candidate admits a reliable scalar score
\citep{lightman2023letsverifystepstep}. This is natural for code, arithmetic,
and some closed-form tasks
\citep{cobbe2021trainingverifierssolvemath,
chen2021evaluatinglargelanguagemodels,
chen2022codetcodegenerationgenerated}, but less natural for many
vision-language model (VLM) tasks. A candidate may be partly right and partly
wrong
\citep{wang2025visualprm,brown2024largelanguagemonkeysscaling,
jinnai2025regularized,chen2024unihd,wang2026evpv}. A report may contain a
correct modality claim and an unsupported finding
\citep{miura-etal-2021-improving,delbrouck-etal-2022-improving,
heiman2025factchexcker}; a chart explanation may read one value correctly but
compare two values incorrectly \citep{masry2022chartqa,liu2023deplot}; and a
document answer may depend on multiple OCR spans and a layout relation
\citep{mathew2021docvqa,mathew2021infographicvqa}. In such cases, no cheap
whole-response verdict may exist, although a crop, lookup, region, span, or
isolated claim can still be checked
\citep{chen2024unihd,surismenon2023vipergpt,
gupta2022visualprogramming,ding2025armthinker}. Selection therefore faces a
granularity mismatch: candidates are complete objects, while verification
arrives through costed local views.

\emph{Best-of-Evidence} (BoE) addresses this mismatch while keeping the
sampled pool fixed. The cheap BoN context initializes candidate ranking; BoE
then represents canonical claims and candidate stances, and spends a limited
budget on checks that may change the final choice. In an ideal Bayesian
formulation, observations update posterior mean utilities. The practical
controller approximates this objective with discrimination-weighted scores
and a plug-in value-of-information rule. With zero evidence budget, BoE
recovers the underlying BoN decision.

This view builds on recent reward-evaluation systems that organize
heterogeneous tools, references, checklists, and verifiers into grounded
evaluation procedures
\citep{chen2026skillrm,hu2026openreward,zhang2026agentvr}. These systems show
how evidence can be gathered and aggregated; the remaining question here is
which checks are worth acquiring when no dependable whole-response score is
available. A lookup, local judge, or whole-response evaluator is useful only
insofar as its possible outcomes can alter selection.

BoE represents that selection structure with a signed candidate--factor graph.
Factor nodes are inspectable claims, and edges record whether each candidate
asserts, denies, or is unrelated to a claim \citep{K2001factorgraphs}. One
observation can therefore support part of the pool and contradict another.
Useful reuse is not determined by graph degree alone: the checked claim must
separate candidates that remain competitive under the cheap prior. This
signed sharing permits compressive verification
\citep{Aldridge2026Grouptesting} without assuming that every candidate pool
forms an exact latent factor code.

\paragraph{We make three contributions in this paper:}
\begin{itemize}[leftmargin=1.4em]
    \item We formulate selection under partial verification and introduce a
    budgeted score-based controller that reuses signed local evidence across a
    fixed BoN candidate pool.
    \item We establish a residual-information upper bound, a scoped
    factor-code query separation, and a local account of decision-effective
    evidence reuse.
    \item We evaluate BoE with common evidence ledgers on four medical VQA
    settings, characterizing fixed-ledger policy differences and the channel
    and candidate-pool limits of partial verification.
\end{itemize}
\section{Selection under Partial Verification}
\label{sec:problem}

Before introducing BoE, we first define the partial-verification selection
problem it approximates. We proceed from the fixed candidate pool, to the
cheap prior, to shared verifiable claims, and finally to the budgeted
selection objective that Section~\ref{sec:boe_method} seeks to approximate.

\paragraph{Notation conventions.}
Random objects are uppercase and their realizations are lowercase when the
distinction matters. We use $K$ for the candidate-pool size, $U_i$ for latent
candidate utility, $\mu_i$ for an exact Bayesian posterior mean, and
$\widetilde s_i$ for the uncalibrated score used in ledger replay. The symbol
$\pi$ denotes an evidence-acquisition policy, whereas
$p_\phi^{\mathrm{gen}}$ denotes the candidate generator. Candidate, claim,
acquisition-round, dataset, and question indices are $i,j,t,d,r$; $m$ denotes
a terminal selection size and $q$ a query count. We use $E_t$ and $O_t$ for
random acquired actions and outcomes, $e$ and $o$ for their realizations,
$\mathsf t$ for a realized transcript, and $O_e$ for the prospective outcome
of acquiring action $e$ next. Unless explicitly stated in bits, mutual
information is measured in nats.

\subsection{Candidate pool and cheap prior}

We first fix the objects among which selection is performed. Let $(\Omega,X)$
denote a random task instance. The record $X$ contains the multimodal input,
question, and admissible same-instance metadata, but not the benchmark
reference. Conditional on $X=x\in\mathcal X$, the generator samples the joint
candidate pool
\begin{equation}
\label{eq:candidate_sampling}
\mathbf{Y}_{1:K}\mid X=x
\sim
p_\phi^{\mathrm{gen}}(\cdot\mid x).
\end{equation}
This equation fixes the pool used by all later verification and selection.
A realized pool $\mathbf y_{1:K}$ remains unchanged throughout, and no
conditional independence among its candidates is required.

A candidate may be a short answer, report, box set, mask, or another structured
output. The latent state $\Omega$ contains the facts or reference information
needed to define correctness, but neither the selector nor an evidence action
may query it. Candidate utility is
\begin{equation}
\label{eq:utility}
U_i=u(\Omega,\mathbf{Y}_i)\in[0,1],
\qquad
\mathbf{U}=(U_1,\ldots,U_K).
\end{equation}
This defines the hidden target that evidence and selection seek to infer.

Before purchasing evidence, the selector observes only
$F=f_0(X,\mathbf Y_{1:K})$. The fixed cheap interface $f_0$ may expose
candidate outputs, answer frequencies, validity flags, self-consistency
statistics, and other declared zero-cost features, but not the complete record
$X$. It induces
\begin{equation}
\label{eq:prior}
P(\mathbf{U}\mid F),
\qquad
\mu_i^{\mathrm{prior}}(F)=\E[U_i\mid F].
\end{equation}
This is the zero-evidence Bayesian ranking. The problem is nontrivial only when
$\mathbf U$ is not almost surely determined by $F$. An ideal Bayesian BoN
selector maximizes $\mu_i^{\mathrm{prior}}(F)$ with fixed tie-breaking. The
ledger implementation instead initializes an answer-frequency score
$s_i^{(0)}$ whose maximizer agrees with raw majority; it is not assumed that
$s_i^{(0)}=\mu_i^{\mathrm{prior}}(F)$.

\subsection{Signed shared claims}

We next define what partial verification can address. Let
$\chi_1,\ldots,\chi_M$ be canonical verifiable propositions about $\Omega$,
called factors when represented as graph nodes, and let
$\boldsymbol{\Theta}=(\Theta_1,\ldots,\Theta_M)\in\{0,1\}^M$
denote their latent truth values. Examples include a chart value, OCR span,
spatial relation, modality, anatomical site, or visual finding.

The signed incidence matrix
\begin{equation}
\label{eq:signed_incidence}
\mathbf{B}\in\{-1,0,+1\}^{K\times M}
\end{equation}
records candidate stance: $B_{ij}=+1$ if candidate $i$ asserts $\chi_j$,
$B_{ij}=-1$ if it denies $\chi_j$, and $B_{ij}=0$ if the claim is irrelevant.
The same information is represented as the signed candidate--factor graph
\begin{equation}
\label{eq:graph}
G_{\mathrm{cf}}
=
(\mathcal{V}_c,\mathcal{V}_f,\mathcal{E}_{\mathrm{cf}}),
\end{equation}
where $\mathcal V_c$ contains candidate nodes, $\mathcal V_f$ contains claim
nodes, and signed edges correspond to nonzero entries of $\mathbf B$.

Together, these equations determine how one observation is reused. Confirming
claim $j$ supports candidates with $B_{ij}=+1$, contradicts those with
$B_{ij}=-1$, and leaves those with $B_{ij}=0$ unchanged; rejection reverses
the two nonzero directions. The observation is acquired once and applied
through all incident edges.

The graph is an evidence interface, not a complete causal or generative model
of utility. It need not explain every component of $U_i$ or require
$\boldsymbol\Theta$ to determine $\mathbf U$. Sharing alone is also
insufficient: if all competitive candidates take the same stance, a check may
move their scores together without changing the winner. Useful shared evidence
must therefore be residual and contrastive under the cheap prior.

\subsection{Costed evidence and the decision objective}

We finally define how additional information is acquired. Let $\mathcal A$
denote the evidence-action menu. At round $t$, let $E_t$ be the random action,
$O_t$ its random outcome, and
$T_{t-1}^{\pi}=(E_1,O_1,\ldots,E_{t-1},O_{t-1})$
the purchased transcript prefix. The executor constructs an action-specific
view of the same record and returns an observation:
\begin{equation}
\label{eq:access_channel}
Z_t=\operatorname{view}_{E_t}(X,\mathbf Y_{1:K},T_{t-1}^{\pi}),
\qquad
O_t=\psi_{E_t}(Z_t,F,T_{t-1}^{\pi};
\mathsf{Seed}_t^{\mathrm{exec}}).
\end{equation}
The equation separates what an action is allowed to inspect from the verdict
it returns. A view may be a crop, mask-guided region, metadata field,
structured lookup, or human inspection of the same $X$. Neither map may query
$\Omega$, $\mathbf U$, or the benchmark reference. Factor actions inspect
claim nodes, whereas candidate actions inspect complete responses.
Localization only determines the view; it is not an additional observation.

An access-admissible policy observes only $F$, purchased action--outcome pairs,
and independent private randomization. It produces a terminal transcript
$T_\pi$ with realized cost at most $C$. The complete measurability,
stopping-time, and transcript definitions are given in
Appendix~\ref{app:formal_problem}; let $\Pi_C^{\mathrm{acc}}$ denote this
policy class.

After observing $T_\pi$, the Bayes action selects the candidate with the
largest conditional mean utility. The ideal budgeted value is
\begin{equation}
\label{eq:policy_objective}
\operatorname{OPT}_{C}^{\mathrm{sel}}
=
\sup_{\pi\in\Pi_C^{\mathrm{acc}}}
\underbrace{
\E\!\left[
\max_{1\le i\le K}\E[U_i\mid F,T_\pi]
\right]
}_{\substack{\text{expected utility of the candidate selected}\\
\text{after the purchased evidence}}}
.
\end{equation}
Equation~\eqref{eq:policy_objective} is the key to Section~\ref{sec:boe_method}. For a fixed evidence policy, the inner
conditional expectation gives the posterior mean utility of each candidate
after the purchased observations, and the maximum selects the best candidate
under that updated belief. The outer expectation averages over task
instances, candidate pools, evidence outcomes, and policy randomness. The
supremum asks which budget-feasible evidence policy gives the best final
selection on average.

With $C=0$, the transcript is empty and the objective reduces to ideal
Bayesian BoN. Exact optimization is generally intractable because actions have
heterogeneous costs, correlated effects, and adaptive availability.
Section~\ref{sec:boe_method} therefore uses a sequential
value-of-information approximation.
\section{Best-of-Evidence}
\label{sec:boe_method}

BoE is a practical sequential approximation to
Equation~\eqref{eq:policy_objective}. As illustrated in
Figure~\ref{fig:boe_overview}, it builds signed shared claims from a fixed
candidate pool, attaches heterogeneous checks, allocates a budget according to
their predicted effect on selection, and re-ranks the unchanged pool after
each observation.

\subsection{Bayesian evidence allocation}

BoE canonicalizes equivalent claims, preserves candidate stance, and
associates each claim with admissible evidence actions. The menu may combine
structured verification, whole-image or grounded VLM judgments, and
whole-response judging. An unresolved or inapplicable claim produces no
applicable signal rather than a contradiction.

For a realized cheap context $F=f$ and transcript
$\mathsf t=((e_1,o_1),\ldots,(e_t,o_t))$, the ideal posterior mean utility is
\begin{equation}
\label{eq:posterior_utility}
\mu_i(f,\mathsf t)
=
\E[U_i\mid F=f,T_t^\pi=\mathsf t].
\end{equation}
Let
$V_{\mathrm{sel}}(f,\mathsf t)=\max_i\mu_i(f,\mathsf t)$
denote the value of selecting immediately. For a feasible unrevealed action
$e$, its exact expected value of sample information is
\begin{equation}
\label{eq:evsi}
\EVSI_{\mathrm{sel}}(e\mid f,\mathsf t)
=
\E_{O_e\mid F=f,T_t^\pi=\mathsf t}
\!\left[
V_{\mathrm{sel}}\!\left(f,\mathsf t\oplus(e,O_e)\right)
\right]
-
V_{\mathrm{sel}}(f,\mathsf t).
\end{equation}
This quantity measures the expected change in final selection value rather
than claim accuracy or graph degree alone. BoE applies this principle
sequentially: it acquires a feasible action with high estimated value per
cost, updates the transcript and candidate scores, and stops when the
remaining budget or estimated value no longer justifies another check. The
formal greedy rule, stopping condition, and complete replay procedure are
given in Appendix~\ref{app:boe_controller} and
Algorithm~\ref{alg:boe}.

\subsection{Discrimination-weighted ledger replay}

The benchmark replay does not fit a complete observation model for
$P(\Omega,\mathbf U,T_\pi\mid F)$. It initializes
$\widetilde s_i(\varnothing)=s_i^{(0)}$ and maps each outcome
$o\in\mathcal O_e$ to a signed candidate effect
$\alpha_{ie}(o)\in\{-1,0,+1\}$. Each action also inherits a pooled outcome
mass $\widehat p_e$ and a selection-side discrimination index
$\kappa(e)\in(0,1)$. Writing
$\operatorname{wt}(e)=\operatorname{logit}\kappa(e)$, the implemented update is
\begin{equation}
\label{eq:score_update}
\operatorname{logit}\widetilde s_i\!\left(\mathsf t\oplus(e,o)\right)
=
\operatorname{logit}\widetilde s_i(\mathsf t)
+\alpha_{ie}(o)\operatorname{wt}(e).
\end{equation}
Thus $\alpha_{ie}(o)=0$ or $\kappa(e)=0.5$ gives no update, while
$\kappa(e)<0.5$ reverses an anti-correlated channel. The controller evaluates
all possible outcomes with the following plug-in score.

\begin{boeanchor}{Implemented replay action value}
\begin{equation}
\label{eq:plugin_evsi}
\widehat{\EVSI}_{\mathrm{sel}}(e\mid f,\mathsf t)
=
\underbrace{
\sum_{o\in\mathcal O_e}
\widehat p_e(o)
\max_i\widetilde s_i\!\left(\mathsf t\oplus(e,o)\right)
}_{\substack{\text{expected best score}\\\text{after check }e}}
-
\underbrace{
\max_i\widetilde s_i(\mathsf t)
}_{\text{current best score}}.
\end{equation}
\small
Feasible actions are ranked by
$\widehat{\EVSI}_{\mathrm{sel}}(e\mid f,\mathsf t)/c(e)$. The hat marks a
ledger-instantiated score approximation rather than exact Bayesian EVSI.
\end{boeanchor}

The pooled $\widehat p_e$ is not a history-conditioned predictive model, and
$\kappa(e)$ measures candidate discrimination rather than factor-truth
accuracy or a likelihood parameter. Accordingly, $\widetilde s_i$ is an
evidence-updated selection score, not a calibrated posterior probability.
Appendix~\ref{app:boe_controller} gives the full score construction and its
relation to the exact Bayesian quantities. The evidence log records each acquired action, source, outcome, cost, discrimination index, and affected candidates. It contains no chain-of-thought text.


\section{When Can Shared Evidence Help?}
\label{sec:theory}

Shared evidence can help only if it is decision-relevant, structurally reusable,
and informative beyond the cheap context. We summarize these three mechanisms
and defer their formal statements and proofs to Appendix~\ref{app:formal_theory}.

\subsection{Decision-effective signed reuse}

At an active history $h=(f,\mathsf t)$, let $\mathbf b_j$ be column $j$ of the
signed candidate--claim matrix. A positive semidefinite matrix $\mathbf W_h$ weights score directions by their effect on
the current selection.
We define
\begin{equation}
\label{eq:effective_reuse_main}
R_{\mathrm{eff}}(j;h)
=
\mathbf b_j^\top\mathbf W_h\mathbf b_j.
\end{equation}
Thus, a claim can touch many candidates yet have little value if it moves them
in a common direction.

\begin{boeanchor}{When is a shared claim worth checking?}
{\small For a factor action $e$ targeting claim $j(e)$, let $\iota_e(h)$ be
the conditional variance of its centered score innovation and $c(e)$ its
cost. The local model yields, up to a common factor $1/2$,\par}
\begin{equation}
\label{eq:factor_density_main}
D_{\mathrm{factor}}(e;h)
=
\frac{
\underbrace{\iota_e(h)}_{\substack{\text{unpredictable}\\\text{score signal}}}
\underbrace{R_{\mathrm{eff}}(j(e);h)}_{
\substack{\text{signed decision}\\\text{contrast}}}
}{
\underbrace{c(e)}_{\text{cost}}
}.
\end{equation}
\small
This is a local explanatory proxy, not a Taylor expansion or the implemented
controller objective. The replay controller does not estimate $\mathbf W_h$,
$R_{\mathrm{eff}}$, or $\iota_e(h)$.
\end{boeanchor}

The formula explains why useful reuse requires signal, signed contrast, and
favorable cost rather than graph degree alone. Appendix~\ref{app:formal_theory}
compares this density with candidate-level checks.

\begin{figure}[H]
\centering
\includegraphics[width=\linewidth]{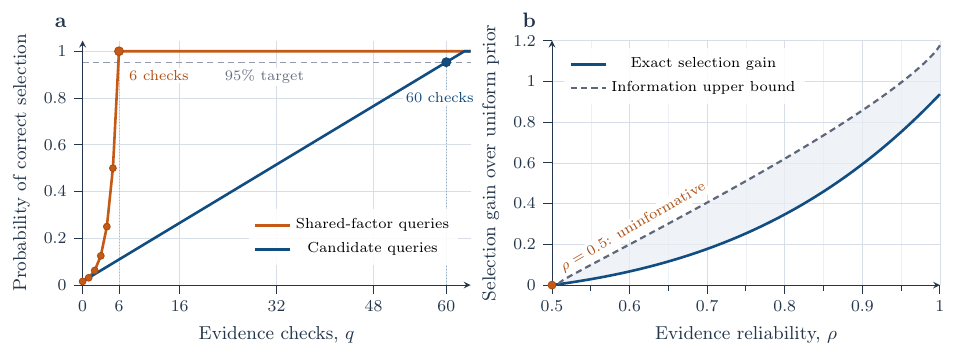}
\caption{\textbf{Exact consequences of the finite constructions.}
\textbf{a}, Six shared-factor queries recover a six-bit code; candidate queries
need 60 checks to exceed $95\%$ success.
\textbf{b}, In the four-factor noisy model, the exact gain
$\rho^4-1/16$ stays below the information upper bound and vanishes with an
uninformative channel.}
\label{fig:exact_theory_checks}
\end{figure}

\subsection{A scoped query-complexity separation}
\label{sec:controlled_theory_probes}

We prove a structural witness in which $K=2^M$ candidates encode all
assignments of $M$ binary claims. Exact factor queries reveal shared
coordinates, whereas exact candidate queries test complete assignments.

\begin{boetakeaway}
For target success at least $1-\varepsilon$, $\varepsilon\in[0,1)$,
\[
\underbrace{Q_{\mathrm{factor}}=\log_2 K}_{\text{shared-coordinate checks}}
\qquad\text{whereas}\qquad
\underbrace{Q_{\mathrm{candidate}}\ge
\left\lceil(1-\varepsilon)K\right\rceil-1}_{\text{whole-candidate checks}}.
\]
\end{boetakeaway}

This proves a possible $O(\log K)$ versus $\Theta(K)$ gap, not that arbitrary
VLM pools form complete codes. Figure~\ref{fig:exact_theory_checks} shows this
construction and a separate noisy-factor model; the closed-form calculations
and access-admissible proof appear in Appendix~\ref{app:formal_theory}. The
noisy-model parameter $\rho$ is distinct from the empirical replay index
$\kappa$.

\subsection{Residual-information limit}

For an adaptive policy $\pi$, let $T_\pi$ contain its acquired actions and
observations. The residual evidence capacity under budget $C$ is
\begin{equation}
\label{eq:evidence_capacity_main}
\Lambda_C
=
\sup_{\pi\in\Pi_C^{\mathrm{acc}}}
\MI(\mathbf U;T_\pi\mid F).
\end{equation}
It removes information already present in $F$ and accounts for redundancy
among adaptive observations.

Under the conditional sub-Gaussian assumption in Appendix~\ref{app:formal_theory},
we prove the following evidence-only ceiling for a terminal selection of size
$m$.

\begin{boeanchor}{The information ceiling}
\begin{equation}
\label{eq:main_evidence_bound}
\E[V_m^\pi(F,T_\pi)-V_m(F)]
\le
\underbrace{
\sqrt{\frac{m}{2}\MI(\mathbf U;T_\pi\mid F)}
}_{\substack{\text{from acquired information}\\\text{beyond }F}}
\le
\underbrace{
\sqrt{\frac{m}{2}\Lambda_C}
}_{\substack{\text{from best accessible evidence}\\\text{within budget}}}.
\end{equation}
\small
Small residual information rules out a large evidence-driven improvement.
\end{boeanchor}

We also prove a total-throughput upper law combining cheap-context information
and $\Lambda_C$. These are necessary information constraints, not
achievability guarantees or controller objectives; all assumptions and
proofs are given in Appendix~\ref{app:formal_theory}.

\section{Experiments}
\label{sec:experiments}

We evaluate whether budgeted evidence improves selection after the candidate
pool and all potential observations are fixed. This common-ledger design
isolates the selection mechanism rather than end-to-end variation from new
candidate samples. It also probes the account in Section~\ref{sec:theory}:
useful gains require residual, decision-relevant evidence and a correct
candidate that selection can recover. Detailed protocols, the historical
SLAKE pilot, and other information appear in
Appendix~\ref{app:experimental_details}.

\subsection{Benchmark design}
\label{sec:experimental_setup}

\paragraph{Evaluation cells and models.}
The evaluation covers VQA-Med, PathVQA, an option-hidden PMC-VQA protocol, and
MedXpertQA-MM
\citep{benabacha2019vqamed,he2021pathvqa,zhang2024pmcvqa,zuo2025medxpertqa}.
PMC-VQA hides the original options and uses the correct option text as the
open-answer reference; MedXpertQA-MM retains five options and one to six
images. All four cells use Qwen3-VL-30B-A3B as the candidate generator and
Qwen3-VL-235B-A22B as the evidence judge. We sample $K=16$ candidates at
temperature $1.1$ and replay budgets $C\in\{1,2,4,8,16\}$, with $C=16$ as the
main comparison. Table~\ref{tab:experiment_cells} summarizes the resulting
candidate pools and their selection ceilings.

\begin{table}[H]
\centering
\small
\caption{\textbf{Main evaluation cells and candidate-generation health.}
Parsed is the fraction of generations with a recovered structured output;
Answer is the fraction with a usable final answer. Oracle@$K$ is the fraction
of questions whose candidate pool contains at least one correct answer.}
\label{tab:experiment_cells}
\resizebox{\linewidth}{!}{
\begin{tabular}{lllrrrr}
\toprule
Dataset & Protocol & Generator / judge
& $n_d$ & Parsed (\%) & Answer (\%) & Oracle@$K$ (\%) \\
\midrule
VQA-Med & Open answer, fixed battery & 30B / 235B
& 2,334 & 95.1 & 97.7 & 82.1 \\
PathVQA & Open answer, fixed battery & 30B / 235B
& 9,903 & 94.0 & 99.1 & 61.5 \\
PMC-VQA & Option-hidden open protocol & 30B / 235B
& 10,000 & 88.2 & 98.5 & 65.1 \\
MedXpertQA-MM & Single-answer five-way MCQ, 1--6 images & 30B / 235B
& 2,000 & 96.2 & 94.5 & 64.7 \\
\bottomrule
\end{tabular}}
\end{table}

\paragraph{Ledger, evidence, and policies.}
For each question, we cache one candidate pool, its signed factor graph, and
all available evidence outcomes, channel identities, costs, and evaluation
utilities. Replay policies therefore share the same generation and potential
observations; they differ only in which entries they reveal and how they
select from the updated scores. The evidence judge returns observations but
does not rank candidates.

Each open-answer candidate instantiates five slots: modality, anatomical
region, view or plane, primary finding, and one answer-relevant attribute.
Unresolved or inapplicable slots return $\varnothing$, and canonicalized claims
are merged with stance preserved. MedXpertQA-MM additionally records the
figure identifier. Structured checks have cost $1$, ordinary or grounded VLM
factor judgments cost $4$, and whole-response judgments cost $8$. These are
design costs rather than measured wall-clock or token costs.

At $C=16$, we compare raw BoN majority, random factor acquisition,
whole-response judging, BoE, and a myopic label-guided allocator. The last
uses evaluation labels to choose factors greedily and is a non-deployable,
factor-only diagnostic rather than an upper bound. Random acquisition is also
factor-only, while BoE may use whole-response checks. Their main comparison
therefore uses unmatched menus; a matched replay is reported separately.

\begin{figure}[t]
\centering
\includegraphics[width=\linewidth]
{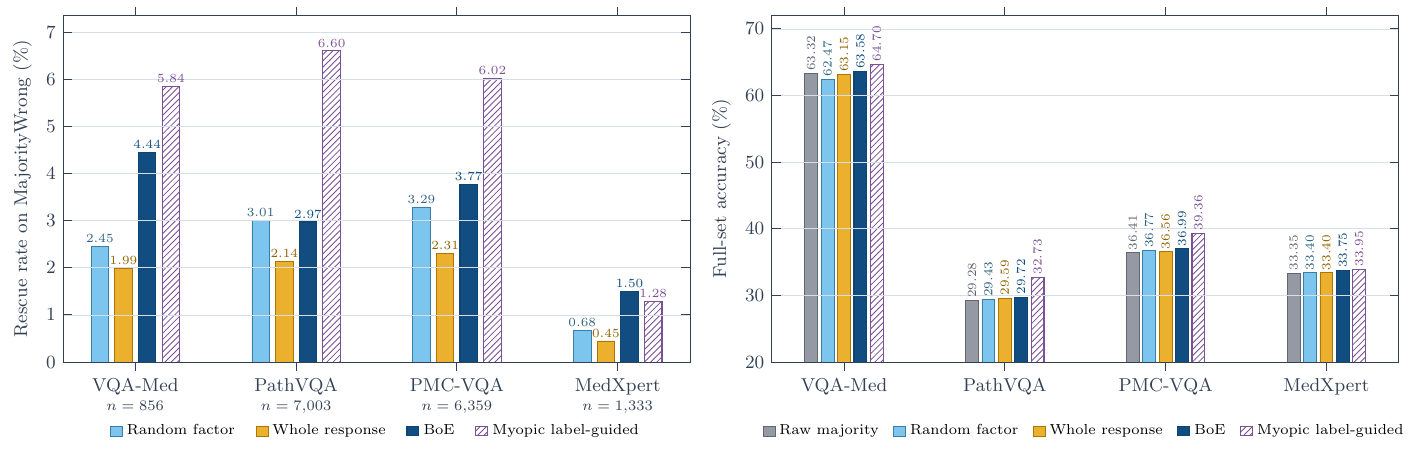}
\caption{\textbf{Selection performance at $C=16$.}
\textbf{Left}, rescue rates on the policy-aligned
\textsc{MajorityWrong} subsets; raw BoN is omitted because its accuracy is
zero by definition, and subset sizes are shown below the dataset names.
\textbf{Right}, full-set selected-candidate accuracy; the vertical axis begins
at $20\%$. In both panels, the myopic label-guided allocator is a
non-deployable, factor-only diagnostic with no upper-bound interpretation.
BoE value labels are bolded for visual emphasis.}
\label{fig:selection_summary}
\end{figure}

\paragraph{Evaluation metrics.}
Each dataset--channel group is assigned a discrimination index $\kappa_g$,
and its categorical outcome frequencies provide the plug-in mass
$\widehat p_e$ in Equation~\eqref{eq:plugin_evsi}. Fitted indices, outcome
frequencies, and policy evaluation use the same ledger. Open-answer
correctness combines deterministic matching with semantic-equivalence grading,
whereas MedXpertQA-MM uses exact matching over A--E. Because the open-answer
evaluator and evidence judge use the same model tier, whole-response and
channel-quality results are interpreted as mechanism diagnostics.

We report full-set selected-candidate accuracy, Oracle@$K$, and
\textsc{MajorityWrong}, the questions on which raw BoN selects an incorrect
candidate. Raw BoN has zero accuracy on this subset, while Oracle@$K$
distinguishes selection-fixable cases from generation failures. Paired
intervals and McNemar tests are conditional summaries of the retained ledgers
and one candidate-generation seed. Besides, we do not estimate $\Lambda_C$ on the VLM ledgers. For dataset $d$ with $n_d$
questions, policy $\pi$, budget $C$, and realized transcript
$\mathsf t_{r,C}^{\pi}$ for question $r$, we report the entropy-shaped score
proxy
\begin{equation}
\label{eq:sharpening_score}
\mathsf{Sharp}_{d,C}^{\pi}
=
\frac{1}{n_d}
\sum_{r=1}^{n_d}
\sum_{i=1}^{K}
\left[
H_b(s_{ri}^{(0)})
-H_b\!\left(\widetilde s_{ri}(\mathsf t_{r,C}^{\pi})\right)
\right].
\end{equation}
Here $H_b(x)=-x\log_2x-(1-x)\log_2(1-x)$. Because
$\widetilde s_{ri}$ is uncalibrated, $H_b$ is only a shape function. The
resulting dimensionless mean can be negative. It is neither information nor an
estimate, upper bound, or lower bound for $\Lambda_C$.

\subsection{Benchmark results}
\label{sec:benchmark_results}

Figure~\ref{fig:selection_summary} gives two views of the fixed-ledger result
at $C=16$. The right panel evaluates all questions, while the left panel
restricts attention to failures of the cheap BoN decision.

\paragraph{Full-set selection.}
BoE is $0.26$--$0.58$ percentage points above raw majority across the four
cells. This consistent direction shows that evidence-aware reranking can
improve a fixed candidate pool, but the magnitude is modest and does not by
itself identify the allocation effect: random and whole-response policies also
acquire evidence.

The closest policy comparison is therefore BoE against random factor
acquisition. Table~\ref{tab:paired_uncertainty} shows that only the VQA-Med
BoE--random interval excludes zero. PathVQA and PMC-VQA have positive
BoE--raw intervals but no separated BoE--random contrast, suggesting that
evidence can improve selection there without establishing an advantage for
the current allocation rule.

\begin{table}[H]
\centering
\small
\caption{\textbf{Question-paired uncertainty at $C=16$.}
Intervals and gaps are in percentage points. Discordants are counts of
BoE-correct/random-wrong and BoE-wrong/random-correct questions. McNemar
$p$-values refer only to BoE versus random factor. All analyses are exploratory
and uncorrected.}
\label{tab:paired_uncertainty}
\resizebox{\linewidth}{!}{
\begin{tabular}{lcccc}
\toprule
Dataset & BoE--raw (95\% interval) & BoE--random (95\% interval)
& Discordants & McNemar $p$ \\
\midrule
VQA-Med       & $+0.26\;[-0.47,\,+0.99]$ & $\mathbf{+1.11\;[+0.43,\,+1.80]}$ & 47 / 21  & 0.0022 \\
PathVQA       & $+0.43\;[+0.05,\,+0.81]$ & $+0.29\;[-0.09,\,+0.68]$ & 209 / 180 & 0.16 \\
PMC-VQA       & $+0.58\;[+0.19,\,+0.97]$ & $+0.22\;[-0.18,\,+0.61]$ & 213 / 191 & 0.30 \\
MedXpertQA-MM & $+0.40\;[-0.15,\,+0.95]$ & $+0.35\;[-0.20,\,+0.90]$ & 20 / 13 & 0.30 \\
\bottomrule
\end{tabular}}
\end{table}


On VQA-Med, the separated BoE--random contrast is $+1.11$ points, while the
smaller BoE--raw interval includes zero. Because random factor acquisition is
below raw majority and uses a narrower menu, this is a fixed-ledger policy
contrast rather than a demonstrated absolute or pure routing gain. On
MedXpertQA-MM, both paired intervals include zero, and restricting BoE to the
same factor-only menu reduces the descriptive gap from $+0.35$ to $+0.15$
points.

\begin{table}[H]
\centering
\small
\caption{\textbf{Assigned channel discrimination and entropy-shaped score
proxy at $C=16$.}
Each $\kappa_g$ is for the dataset--channel group specified by its row and
column; Structured $\kappa_g$ gives the range over populated fixed-battery
slots. $\mathsf{Sharp}_{d,C}^{\pi}$ is dimensionless and is not information
or $\Lambda_C$.}
\label{tab:evidence_diagnostics}
\resizebox{\linewidth}{!}{
\begin{tabular}{lrrrrrrr}
\toprule
Dataset
& VLM $\kappa_g$
& Grounded $\kappa_g$
& Structured $\kappa_g$
& Whole $\kappa_g$
& $\mathsf{Sharp}_{d,C}^{\mathrm{BoE}}$
& $\mathsf{Sharp}_{d,C}^{\mathrm{random}}$
& $\mathsf{Sharp}_{d,C}^{\mathrm{whole}}$ \\
\midrule
VQA-Med
& 0.539 & 0.505 & 0.522--0.624 & 0.715
& +1.025 & +0.280 & +0.653 \\
PathVQA
& 0.518 & 0.534 & 0.275--0.517 & 0.608
& $-0.737$ & +0.092 & $-0.617$ \\
PMC-VQA
& 0.502 & 0.522 & 0.377--0.534 & 0.623
& $-0.276$ & $-0.006$ & $-0.202$ \\
MedXpertQA-MM
& 0.514 & 0.526 & n/a & 0.642
& +1.006 & +0.021 & +0.799 \\
\bottomrule
\end{tabular}}
\end{table}

\paragraph{Selection on raw-majority failures.}
The left panel of Figure~\ref{fig:selection_summary} focuses on examples where
additional evidence has room to change the BoN decision. VQA-Med gives the
clearest result: BoE rescues $4.44\%$ of raw failures, compared with $2.45\%$
for random factor acquisition. PMC-VQA shows a smaller separation, while
PathVQA is effectively tied with random acquisition.

Candidate-pool coverage limits these rescue rates. On MedXpertQA-MM, only 627
of 1,333 raw failures contain a correct candidate; the remaining 706 cannot be
repaired by any selector. This supports the theoretical distinction between
available evidence and generation headroom: evidence can re-rank candidates,
but it cannot create a missing answer. Integer transition counts and
denominator checks appear in Appendix~\ref{app:experimental_details}.

\paragraph{Channel discrimination and the score proxy.}
Table~\ref{tab:evidence_diagnostics} relates the selection results to channel
quality. VQA-Med has the strongest populated structured channel and positive
BoE sharpening. PathVQA and PMC-VQA instead have VLM indices near $0.5$, some
anti-correlated structured slots, and negative realized BoE proxy values.
This pattern is qualitatively consistent with decision-effective reuse:
evidence is useful when its channel is discriminative and its signed update
can separate competitive candidates.

MedXpertQA-MM also shows why score sharpening is not sufficient for accuracy:
its BoE and whole-response proxies are positive, yet their full-set changes
remain small. Evidence must not only move scores, but move them in a
decision-relevant direction on a pool containing a correct answer. Overall,
the experiments support BoE as a fixed-pool evidence controller under
favorable channels, while the remaining cells expose the weak-evidence,
action-menu, and candidate-generation regimes predicted by the formulation.

\FloatBarrier
\section{Related Work}
\label{sec:related_work}

\paragraph{Candidate-level test-time selection.}
BoN samples several responses and selects one with a proxy\citep{cobbe2021trainingverifierssolvemath, brown2024largelanguagemonkeysscaling}. Process reward models add step-level supervision \citep{lightman2023letsverifystepstep}, while regularized BoN and Best-of-Tails control reward hacking and extreme-score errors \citep{jinnai2025regularized,hsu2026bestoftails}. Other work allocates generation or verification compute adaptively and replaces pointwise scores with pairwise or criteria-decomposed judgments \citep{snell2024scalingllmtesttimecompute,tang2025carbon,liu2025pairjudgermperformbestofn,kwok2026llmverifier,dughmi2026adap}. Their verification signal remains attached to a candidate or reasoning step. BoE treats a local claim shared by candidates with opposing stances as the verification unit.

\paragraph{Grounded verification and budgeted acquisition.}
VLM inference combines sampling with visual search and verification
\citep{jeddi2026avis}. Tools expose local evidence through cropping,
detection, retrieval, or execution
\citep{surismenon2023vipergpt,gupta2022visualprogramming,chen2024unihd};
VisualPRM, EVPV, and TIM-PRM score reasoning or verify visual premises
\citep{wang2025visualprm,wang2026evpv,kuang2025timprm}. ARM-Thinker,
Skill-RM, OpenReward, and AgentVR organize heterogeneous evaluation resources
\citep{ding2025armthinker,chen2026skillrm,hu2026openreward,
zhang2026agentvr}, while visual self-verification remains weak
\citep{wu2026ahamoment}. Value-of-information, active feature acquisition, and
submodular methods select costly observations
\citep{jackson2019valueinformation,li2021activefeature,
wei2015submodularity,krause2014submodular}; group testing studies shared
queries \citep{Aldridge2026Grouptesting}, factor graphs encode shared
structure \citep{K2001factorgraphs}, and epistemic throughput studies
complete-record verification \citep{you2026epistemic}. BoE combines these
threads through signed claims shared across candidates and valued by final
selection.

\paragraph{Medical visual question answering.}
Medical VQA benchmarks span radiology, pathology, and biomedical figures, with large differences in answer space and difficulty. VQA-RAD introduced clinician-authored radiology questions; VQA-Med organizes modality, plane, organ, and abnormality questions; SLAKE adds semantic labels and a knowledge base \citep{lau2018vqarad,benabacha2019vqamed,liu2021slake}. PathVQA extends to pathology, PMC-VQA to large-scale literature figures, and MedXpertQA-MM to expert-level multimodal cases with rich clinical context \citep{he2021pathvqa,zhang2024pmcvqa,zuo2025medxpertqa}. Grounded benchmarks such as HEAL-MedVQA also measure whether answers use the relevant image region \citep{nguyen2025healmedvqa}. Medical answers often decompose into findings, locations, and attributes that can be checked separately. This is one domain-specific instantiation of a domain-independent partial-verification problem.
\section{Conclusion}

We introduced BoE, a test-time selection framework for
partial verification. BoE keeps the BoN candidate pool fixed, represents
reusable local claims with a signed candidate--factor graph, and allocates a
limited evidence budget according to estimated selection value. Its ideal
Bayesian formulation updates posterior mean utilities, while the practical
implementation uses discrimination-weighted score updates and a plug-in
value-of-information rule. Theoretically, we show that residual evidence
capacity limits evidence-driven improvement and that shared factor queries can
be substantially more query-efficient than candidate-level verification in a
factor-code model. Common-ledger experiments on medical VQA support the
proposed mechanism, showing that evidence is most useful when it is reliable,
contrastive, and relevant to the current decision. Together, these results
establish BoE as a structured extension of BoN for selection under partial
verification.

\paragraph{Limitations and future work.}
Although we have proven the effectiveness of the BoE algorithm, it remains constrained by the candidate generator and the evidence it produces. If the candidate pool contains no correct answer, or if extracted claims are noisy, redundant, or incorrectly grounded, evidence acquisition cannot recover the missing information. Future work should therefore post-train generators to produce more diverse candidate pools together with canonical, contrastive, and verifiable claims. The current controller also uses a myopic plug-in EVSI-per-cost rule, which may miss complementary evidence sequences or allocate budget suboptimally. Future work should explore learned or multi-step acquisition policies, stronger calibration and stopping rules, and optimization under measured inference costs. These two directions---improving evidence-producing models and improving evidence-allocation algorithms---are complementary paths toward a stronger BoE system.

\bibliographystyle{unsrt}
\bibliography{references}

\appendix
\clearpage
\section*{Appendix Contents}
\label{app:contents}
\addcontentsline{toc}{section}{Appendix Contents}

\noindent This appendix gives the details that are not needed for the main narrative but are useful for checking the claims. The appendix contains full proofs, the mathematical background behind the BoE, implementation details, metric definitions, additional diagnostics, and limitations.

\vspace{0.8em}
\noindent\hyperref[app:formal_theory]{Appendix~\ref*{app:formal_theory}: Formal Theory and Proofs}\dotfill\pageref{app:formal_theory}\\[0.25em]
\noindent\hyperref[app:experimental_details]{Appendix~\ref*{app:experimental_details}: Experimental Details and Additional Diagnostics}\dotfill\pageref{app:experimental_details}\\[0.25em]

\section{Formal Theory and Proofs}
\label{app:formal_theory}

This appendix formalizes the access model and BoE controllers introduced in
Sections~\ref{sec:problem}--\ref{sec:boe_method}, and then states and proves the
theoretical results summarized in Section~\ref{sec:theory}.

\subsection{Formal Partial-Verification Model}
\label{app:formal_problem}

The candidate-pool law, utility model, cheap context, signed incidence matrix,
and candidate--factor graph are defined in
Equations~\eqref{eq:candidate_sampling}--\eqref{eq:graph}. We give the formal
access and policy semantics here.

For a realized candidate pool, $\mathcal V_c$ indexes the $K$ candidate nodes
and $\mathcal V_f$ indexes the $M$ canonical claim nodes. The edge set
$\mathcal E_{\mathrm{cf}}$ consists of the pairs $(i,j)$ for which
$B_{ij}\ne0$, with sign $B_{ij}$. Thus, the identity of a claim node is
separated from candidate stance: candidates that assert and deny the same
canonical proposition share one node and differ only in their edge signs.

A factor action targets a claim node and returns one acquired outcome. That
outcome appears once in the transcript, although its effect may be applied to
several candidates through the incident signed edges. This propagation does
not create several independent observations. Candidate-level actions instead
inspect a complete response and need not act through a claim node.

The graph specifies the declared verification interface. It need not explain
every component of candidate utility and does not imply that
$\boldsymbol\Theta$ fully determines $\mathbf U$. Candidate utility may also
depend on unrepresented facts, interactions among claims, or aspects of a
response that are not locally verifiable.

Let $\mathcal A$ be the evidence-action menu. Each realized action
$e\in\mathcal A$ has cost $c(e)\ge c_{\min}>0$ and can be acquired at most
once; deliberate replicates must be represented as distinct menu items.
Equation~\eqref{eq:access_channel} separates view construction from the
returned outcome. Fixed model or tool parameters are suppressed, and the
executor seeds $\mathsf{Seed}_t^{\mathrm{exec}}$ are mutually independent and
independent of the task and controller randomization.

A deterministic controller is a measurable rule on finite histories that
returns either \textsc{stop} or an applicable, not-yet-acquired action. A
randomized controller may additionally use a private seed
$\mathsf{Seed}_\pi^{\mathrm{pol}}$ independent of the task and all executor
seeds. Starting from the empty history, the controller and executor induce the
acquired sequence and the random number $\tau$ of acquisitions before
\textsc{stop}.

On the event $\{\tau\ge t\}$, let
$T_t^\pi=(E_1,O_1,\ldots,E_t,O_t)$ be the random transcript prefix, with
$T_0^\pi=\varnothing$. The stopped history
$\overline T_t^\pi=T_{t\wedge\tau}^\pi$ is defined on the entire probability
space and remains equal to the terminal transcript after stopping. The
information available from the task and purchased outcomes after at most $t$
acquisitions is
\begin{equation}
\mathcal H_t=
\sigma(F,\overline T_t^{\pi}).
\end{equation}
For a deterministic policy, the next stop-or-acquire decision is
$\mathcal H_t$-measurable. For a randomized policy, it is measurable with
respect to
\[
\mathcal G_t^\pi
=
\mathcal H_t\vee\sigma(\mathsf{Seed}_\pi^{\mathrm{pol}}).
\]
Its support is restricted to \textsc{stop} and applicable, not-yet-acquired
actions, so $\tau$ is a stopping time relative to
$(\mathcal G_t^\pi)_t$.

Let $\Pi_C^{\mathrm{acc}}$ be the class of such access-admissible policies with
$\tau<\infty$ and realized cost at most $C$ almost surely. Their terminal
action--outcome transcript is
\begin{equation}
\label{eq:transcript}
T_\pi=(E_1,O_1,\ldots,E_\tau,O_\tau),
\qquad
c(T_\pi)=\sum_{t=1}^{\tau}c(E_t)\le C.
\end{equation}
The controller never observes $\Omega$, $\mathbf U$, the benchmark reference,
or an unpurchased outcome. Equation~\eqref{eq:policy_objective} follows from
the Bayes action for top-one selection. Its outer expectation averages over
task instances, candidate pools, evidence outcomes, stopping behavior, and
executor and policy randomization. If the supremum is attained,
$\pi_C^\star$ denotes a maximizing acquisition policy.

\subsection{Exact Bayesian and Ledger-Replay Controllers}
\label{app:boe_controller}

Consider a policy prefix on the event $\{\tau\ge t\}$. Let
$\mathsf t=((e_1,o_1),\ldots,(e_t,o_t))$ be a realization of
$T_t^\pi$, where $o_k\in\mathcal O_{e_k}$. Its realized cost is
\begin{equation}
\label{eq:cost_budget}
c(\mathsf t)
=
\sum_{k=1}^{t}c(e_k)
\le C.
\end{equation}
We write $\mathsf t\oplus(e,o)$ for sequence concatenation and let
$\mathcal A_{\mathrm{avail}}(f,\mathsf t)\subseteq\mathcal A$
contain the applicable actions that have not yet been acquired.
Current-prefix statements are understood for $t$ such that
$\Prob(\tau\ge t)>0$, under the corresponding conditional law.

\paragraph{Exact Bayesian controller.}
Under a specified observation model,
Equation~\eqref{eq:posterior_utility} defines the exact posterior mean.
Access-admissibility implies that the policy identity and its independent
randomization reveal no task information beyond $F$ and the realized
action--outcome history. The policy and prefix-length superscripts can
therefore be suppressed in the notation for $\mu_i$.

The Bayes selection is
\begin{equation}
\label{eq:selection_rule}
i^\star(f,\mathsf t)
\in
\argmax_{1\le i\le K}
\mu_i(f,\mathsf t),
\end{equation}
with a fixed deterministic tie-breaking rule. Its current value is
\begin{equation}
\label{eq:decision_value}
V_{\mathrm{sel}}(f,\mathsf t)
=
\max_i\mu_i(f,\mathsf t).
\end{equation}
For the empty transcript,
$\mu_i(f,\varnothing)=\mu_i^{\mathrm{prior}}(f)$.

The exact EVSI in Equation~\eqref{eq:evsi} uses the prospective random
outcome $O_e$ and the history-conditioned predictive distribution
\[
P(O_e=o\mid F=f,T_t^\pi=\mathsf t).
\]
The corresponding one-step greedy rule is
\begin{equation}
\label{eq:evsi_cost}
e^\star
\in
\argmax_{
e\in\mathcal A_{\mathrm{avail}}(f,\mathsf t):
\,c(\mathsf t)+c(e)\le C
}
\frac{
\EVSI_{\mathrm{sel}}(e\mid f,\mathsf t)
}{
c(e)
}.
\end{equation}
This is a sequential approximation to
Equation~\eqref{eq:policy_objective}, not a globally optimal acquisition
policy in general. The procedure stops when no available action fits the
remaining budget or when the largest value density is at most the threshold
$\eta$.

\paragraph{Ledger-replay approximation.}
The ledger replay replaces the exact observation model with pooled empirical
components. Each action $e$ has a categorical outcome alphabet
$\mathcal O_e$, and
\begin{equation}
\alpha_{ie}:
\mathcal O_e
\longrightarrow
\{-1,0,+1\}
\end{equation}
records whether an outcome supports, contradicts, or supplies no applicable
signal for candidate $i$.

For a factor action targeting claim $j$, this map combines the returned
verdict with the graph sign $B_{ij}$. A confirmation uses the direction
$B_{ij}$, a rejection reverses that direction, and an unresolved outcome or
an unrelated candidate gives zero. The same factor outcome can therefore
produce positive, negative, and zero updates across the candidate pool.
Candidate-level actions define $\alpha_{ie}(o)$ directly for the inspected
response.

\begin{algorithm}[H]
\caption{Best-of-Evidence score-based selection}
\label{alg:boe}
\begin{algorithmic}[1]
\State \textbf{Input:} executor holding raw $x$, candidate count $K$,
evidence budget $C$, action menu $\mathcal A$, pooled outcome masses
$\widehat p$, discrimination indices $\kappa$, threshold $\eta$
\State Sample
$\mathbf y_{1:K}\sim p_\phi^{\mathrm{gen}}(\cdot\mid x)$
and expose the realized cheap context $f$
\State Extract canonical claims and signed candidate stances; build
$G_{\mathrm{cf}}$
\State Instantiate evidence actions with their sources, costs, and
calibration groups
\State Initialize
$\mathsf t\gets\varnothing$
and evidence log
$\mathcal L\gets\varnothing$
\While{a feasible unrevealed action remains}
    \State Compute
    $\widehat{\EVSI}_{\mathrm{sel}}(e\mid f,\mathsf t)/c(e)$
    for every feasible action $e$
    \State Let $\widehat e^\star$ be the fixed-tie maximizer of the plug-in
    value density
    \If{
    $\widehat{\EVSI}_{\mathrm{sel}}
    (\widehat e^\star\mid f,\mathsf t)/
    c(\widehat e^\star)
    \le\eta$
    }
        \State \textbf{break}
    \EndIf
    \State Execute $\widehat e^\star$ and observe
    $o^\star\in\mathcal O_{\widehat e^\star}$
    \State Set
    $\mathsf t\gets
    \mathsf t\oplus(\widehat e^\star,o^\star)$
    \State Update all affected candidate scores using
    Equation~\eqref{eq:score_update}
    \State Append the acquired action and outcome to $\mathcal L$
\EndWhile
\State \textbf{Return} the fixed-tie maximizer of
$\widetilde s_i(\mathsf t)$, the score vector
$\widetilde{\mathbf s}(\mathsf t)$, and the evidence log $\mathcal L$
\end{algorithmic}
\end{algorithm}

Let $\gamma(e)$ be the dataset--channel calibration group assigned to action
$e$. The replay uses the pooled categorical mass
\[
\widehat p_e(o)
:=
\widehat p_{\gamma(e)}(o)
\]
and the assigned selection-side discrimination index
\[
\kappa(e)
:=
\kappa_{\gamma(e)}
\in(0,1).
\]
Either quantity may be estimated from the common ledger or fixed by the
protocol; neither is a parameter of a fitted joint observation model. Define
\begin{equation}
\label{eq:score_weight}
\operatorname{wt}(e)
=
\operatorname{logit}\kappa(e)
=
\log\frac{\kappa(e)}{1-\kappa(e)}.
\end{equation}

Candidate scores are initialized by
$\widetilde s_i(\varnothing)=s_i^{(0)}$, and
\[
\widetilde{\mathbf s}(\mathsf t)
:=
\bigl(
\widetilde s_1(\mathsf t),
\ldots,
\widetilde s_K(\mathsf t)
\bigr).
\]
All replay scores used in the reported experiments lie strictly between zero
and one, so their logits are finite. Suppressing the fixed instance context
$f$, Equation~\eqref{eq:score_update} applies the signed
discrimination-weighted increment to every affected candidate. For a factor
action, the acquired outcome and channel weight are shared, while
$\alpha_{ie}(o)$ determines the candidate-specific update direction.

For each possible outcome $o\in\mathcal O_e$, the replay computes the
hypothetical updated scores and applies
Equation~\eqref{eq:plugin_evsi}. Unlike the exact predictive distribution in
Equation~\eqref{eq:evsi}, $\widehat p_e$ is pooled at the
dataset--channel level and is not conditioned on the current history.
Consequently, $\widehat{\EVSI}_{\mathrm{sel}}$ is a plug-in action-ranking
score and need not inherit exact Bayesian semantics.

The implemented greedy controller ranks all feasible actions by
\[
\widehat{\EVSI}_{\mathrm{sel}}(e\mid f,\mathsf t)/c(e),
\]
selects a fixed-tie maximizer $\widehat e^\star$, and applies the same budget
and threshold stopping rules as the exact controller. The hat distinguishes
this implemented action from the exact-EVSI choice $e^\star$ in
Equation~\eqref{eq:evsi_cost}.

Likewise, $\kappa(e)$ measures candidate discrimination rather than
factor-truth accuracy or a likelihood parameter. We reserve ``posterior'' and
``EVSI'' without hats for the exact quantities in
Equations~\eqref{eq:posterior_utility} and \eqref{eq:evsi}.

The evidence log $\mathcal L$ records each acquired action, source, outcome,
cost, assigned discrimination index, and affected candidates. A shared factor
outcome is stored once. Localization metadata such as boxes, masks, and crops
determines the view supplied to an action but does not enter the score update
as an independent observation.

\subsection{Decision-Effective Signed Reuse and Local Channel Dominance}

Fix a policy $\pi$ and an acquisition round $t$ with
$\Prob(\tau\ge t)>0$. Let $h=(f,\mathsf t)$ be a
$P_{(F,T_t^\pi)\mid\tau\ge t}$-almost-everywhere realization of the active
history. Let
$\mathbf B\in\{-1,0,+1\}^{K\times M}$ be the signed candidate--claim
incidence matrix and let $\mathbf b_j$ denote its $j$-th column.

\begin{definition}[Effective reuse]
\label{def:effective_reuse}
Let $\mathbf W_h\succeq0$ be a residual decision-weight matrix at history
$h$. The effective reuse of claim $j$ is
\begin{equation}
\label{eq:effective_reuse}
R_{\mathrm{eff}}(j;h)
=
\mathbf b_j^\top
\mathbf W_h
\mathbf b_j.
\end{equation}
\end{definition}

The matrix $\mathbf W_h$ gives greater weight to uncertain candidates near the
current decision boundary and may suppress common-mode directions. Thus, the
number of candidates touched by a claim is not itself a measure of useful
reuse.

\begin{assumption}[Local quadratic decision model]
\label{ass:local_quadratic}
Around history $h$, the decision value produced by a small candidate-score
log-odds perturbation $\Delta\boldsymbol\ell$ has the local form
\begin{equation}
\label{eq:local_quadratic_value}
\Delta V
\approx
\frac{1}{2}
\Delta\boldsymbol\ell^{\top}
\mathbf W_h
\Delta\boldsymbol\ell,
\qquad
\mathbf W_h\succeq0.
\end{equation}
For a factor action $e$ targeting claim $j(e)$, let
$\Delta\boldsymbol\ell_e=\xi_e\mathbf b_{j(e)}$, where
$\E[\xi_e\mid h]=0$, $\E[\xi_e^2\mid h]=\iota_e(h)$, and the action cost is
$c(e)$. For a candidate-level check on candidate $i$, let
$\Delta\boldsymbol\ell_i^{\mathrm{cand}}=\zeta_i\mathbf e_i$, where
$\E[\zeta_i\mid h]=0$,
$\E[\zeta_i^2\mid h]=\iota_i^{\mathrm{cand}}(h)$, and the cost is
$c_i^{\mathrm{cand}}$.
\end{assumption}

\begin{proposition}[Local channel dominance]
\label{prop:channel_dominance}
Under Assumption~\ref{ass:local_quadratic}, the expected local decision-value
density of factor action $e$ is proportional to
\begin{boeanchor}{Formal local comparison}
\begin{equation}
\label{eq:factor_density}
D_{\mathrm{factor}}(e;h)
=
\frac{
\iota_e(h)R_{\mathrm{eff}}(j(e);h)
}{
c(e)
}.
\end{equation}
The density of a candidate-level check on candidate $i$ is proportional to
\begin{equation}
\label{eq:candidate_density}
D_{\mathrm{cand}}(i;h)
=
\frac{
\iota_i^{\mathrm{cand}}(h)
\mathbf e_i^\top
\mathbf W_h
\mathbf e_i
}{
c_i^{\mathrm{cand}}
}.
\end{equation}
Hence, within the local model, factor evidence dominates candidate-level
checking when
\begin{equation}
\label{eq:dominance_condition}
\max_{e\in\mathcal A_{\mathrm{fac}}(h)}D_{\mathrm{factor}}(e;h)
>
\max_{1\le i\le K}D_{\mathrm{cand}}(i;h).
\end{equation}
\end{boeanchor}
\end{proposition}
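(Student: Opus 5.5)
The plan is to compute the conditional expectation of the quadratic local decision value in Assumption~\ref{ass:local_quadratic} for each kind of action, divide it by the action's cost, and then compare the two resulting densities. Everything is conditional on the active history $h$. Both $\mathbf W_h$ and the incidence column $\mathbf b_{j(e)}$ are fixed given $h$: $\mathbf W_h$ is a function of the history, and $\mathbf B$ is determined by the realized pool through the cheap context. So the only randomness in the perturbation comes from the scalar innovations $\xi_e$ and $\zeta_i$.

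\textbf{Factor action.} Substitute $\Delta\boldsymbol\ell_e=\xi_e\mathbf b_{j(e)}$ into \eqref{eq:local_quadratic_value}. This gives
\[
\Delta V_e\approx\tfrac12\,\xi_e^2\,\mathbf b_{j(e)}^\top\mathbf W_h\mathbf b_{j(e)}.
\]
Take the conditional expectation given $h$ and pull out the $h$-measurable quadratic form. Using $\E[\xi_e^2\mid h]=\iota_e(h)$, the result is
\[
\E[\Delta V_e\mid h]\approx\tfrac12\,\iota_e(h)\,R_{\mathrm{eff}}(j(e);h),
\]
by Definition~\ref{def:effective_reuse}. Dividing by $c(e)$ gives $\tfrac12 D_{\mathrm{factor}}(e;h)$, which is \eqref{eq:factor_density} with proportionality constant $1/2$. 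Note that $\E[\xi_e\mid h]=0$ is not needed for the quadratic term itself. It matters because it removes any first-order (linear) contribution, which is exactly why the local model is purely quadratic. I will remark on this so that the absence of a linear term in \eqref{eq:local_quadratic_value} is seen to be consistent with martingale innovations.

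\textbf{Candidate-level check.} The same computation with $\Delta\boldsymbol\ell_i^{\mathrm{cand}}=\zeta_i\mathbf e_i$ gives
\[
\E[\Delta V_i\mid h]\approx\tfrac12\,\iota_i^{\mathrm{cand}}(h)\,\mathbf e_i^\top\mathbf W_h\mathbf e_i=\tfrac12\,\iota_i^{\mathrm{cand}}(h)\,(\mathbf W_h)_{ii}.
\]
Dividing by $c_i^{\mathrm{cand}}$ yields $\tfrac12 D_{\mathrm{cand}}(i;h)$, as in \eqref{eq:candidate_density}.

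\textbf{Dominance.} Both densities share the same constant $1/2$, so comparing the expected local value per unit cost is the same as comparing $D_{\mathrm{factor}}$ with $D_{\mathrm{cand}}$. If \eqref{eq:dominance_condition} holds, then the best factor action $e$ in $\mathcal A_{\mathrm{fac}}(h)$ has strictly larger expected local value density than every candidate check. A greedy value-per-cost rule operating in the local model will therefore prefer factor evidence.

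The main obstacle is interpretive rather than computational. The assumption is stated with ``$\approx$,'' so the proposition can only be claimed within the local model, treating \eqref{eq:local_quadratic_value} as an equality for the purposes of the argument. I will state this explicitly, matching the main-text caveat that this is not a Taylor expansion. I will also check that $\mathbf W_h\succeq0$ makes every density nonnegative, so that the comparison of maxima is meaningful. Finally, I will note that ties or an empty $\mathcal A_{\mathrm{fac}}(h)$ are excluded by the strict inequality.
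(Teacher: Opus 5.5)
Your proposal is correct and follows essentially the same route as the paper. The paper likewise substitutes $\Delta\boldsymbol\ell_e=\xi_e\mathbf b_{j(e)}$ and $\Delta\boldsymbol\ell_i^{\mathrm{cand}}=\zeta_i\mathbf e_i$ into the local quadratic form, takes conditional expectations using $\E[\xi_e^2\mid h]=\iota_e(h)$ and $\E[\zeta_i^2\mid h]=\iota_i^{\mathrm{cand}}(h)$, divides by cost, and observes that the common factor $1/2$ cancels in the comparison.
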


\begin{proof}
For factor action $e$, substitute
$\Delta\boldsymbol\ell_e=\xi_e\mathbf b_{j(e)}$ into
Equation~\eqref{eq:local_quadratic_value}:
\begin{align}
\E[\Delta V_e\mid h]
&\approx
\frac{1}{2}
\E[\xi_e^2\mid h]
\mathbf b_{j(e)}^\top
\mathbf W_h
\mathbf b_{j(e)}
\\
&=
\frac{1}{2}
\iota_e(h)R_{\mathrm{eff}}(j(e);h).
\end{align}
Dividing by $c(e)$ yields
$\frac{1}{2}D_{\mathrm{factor}}(e;h)$.

Similarly,
\begin{align}
\E[\Delta V_i^{\mathrm{cand}}\mid h]
&\approx
\frac{1}{2}
\E[\zeta_i^2\mid h]
\mathbf e_i^\top
\mathbf W_h
\mathbf e_i
\\
&=
\frac{1}{2}
\iota_i^{\mathrm{cand}}(h)
\mathbf e_i^\top
\mathbf W_h
\mathbf e_i.
\end{align}
Dividing by $c_i^{\mathrm{cand}}$ gives
$\frac{1}{2}D_{\mathrm{cand}}(i;h)$. The common factor $1/2$ cancels when the
two action classes are compared.
\end{proof}

This proposition is a local explanatory approximation to value per cost, not
a universal ordering of channels and not the plug-in controller objective in
Section~\ref{sec:boe_method}.

\subsection{Compressive Verification and Finite Probe Calculations}

\begin{assumption}[Factor-code candidate family]
\label{ass:factor_code}
Let $\boldsymbol\Theta\in\{0,1\}^M$ be uniformly distributed. The candidate
pool contains one candidate $\mathbf y_{\mathbf a}$ for every assignment
$\mathbf a\in\{0,1\}^M$, so $K=2^M$. Candidate utility is
\begin{equation}
\label{eq:factor_code_utility}
U_{\mathbf a}
=
\ind\{\mathbf a=\boldsymbol\Theta\}.
\end{equation}
To embed the construction in the access model, the executor holds the record
$X=\boldsymbol\Theta$ and the selector starts from constant $F$. A factor
query at coordinate $j$ and a candidate query at assignment $\mathbf a$ have
outcomes
\[
O_{e_j^{\mathrm{fac}}}=X_j,
\qquad
O_{e_{\mathbf a}^{\mathrm{cand}}}
=
\ind\{\mathbf a=X\}.
\]
Both query types inspect admissible views of the same executor-held record and
neither reads a utility variable or benchmark reference.
\end{assumption}

\begin{theorem}[Compressive-verification separation]
\label{thm:compressive_separation}
Under Assumption~\ref{ass:factor_code}, factor-level evidence identifies the
correct candidate with $M=\log_2K$ queries and succeeds with probability one.
Any policy that makes $q$ candidate-level queries and then outputs one
candidate succeeds with probability at most
\begin{equation}
\label{eq:candidate_query_bound}
\min\left\{\frac{q+1}{K},1\right\}.
\end{equation}
Consequently, success probability at least $1-\varepsilon$ requires at least
$\left\lceil(1-\varepsilon)K\right\rceil-1$ candidate-level queries.
\end{theorem}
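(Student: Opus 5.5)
The proof has two parts: an explicit factor-level policy for achievability, and an adversary or posterior argument for the candidate-level lower bound.

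\emph{Factor-level policy.} Query the factor actions $e_1^{\mathrm{fac}},\ldots,e_M^{\mathrm{fac}}$ in order, a fixed non-adaptive sequence that is trivially access-admissible. The outcomes are $O_{e_j^{\mathrm{fac}}}=X_j=\Theta_j$, so the transcript determines $\boldsymbol\Theta$ exactly. The selector then outputs $\mathbf y_{\boldsymbol\Theta}$, whose utility is $1$ almost surely, and it has used $M=\log_2K$ queries.

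\emph{Candidate-level bound.} Fix any, possibly randomized and adaptive, policy that makes $q$ candidate queries and then outputs one candidate. Condition on the private policy seed; it is independent of $X$, so it suffices to prove the bound for deterministic policies and then average.

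For a deterministic policy I will couple it with its run on the ``all-no'' transcript. Every outcome before success is $0$, so the policy's behaviour until it first receives a $1$ is fixed. This determines a fixed sequence of distinct queried assignments $\mathbf a^{(1)},\ldots,\mathbf a^{(q)}$ and a fixed terminal output $\mathbf a^{\mathrm{out}}$ on the all-zero branch.

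The policy succeeds only if $\boldsymbol\Theta\in\{\mathbf a^{(1)},\ldots,\mathbf a^{(q)},\mathbf a^{\mathrm{out}}\}$. If $\boldsymbol\Theta$ lies outside this set, every query returns $0$, the run follows the all-zero branch, and it outputs $\mathbf a^{\mathrm{out}}\ne\boldsymbol\Theta$. This set has at most $q+1$ elements. Because $\boldsymbol\Theta$ is uniform on $K$ points, the success probability is at most $(q+1)/K$, and combining with the trivial bound $1$ gives \eqref{eq:candidate_query_bound}.

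Two technicalities need care. Repeated queries are excluded, since each action is acquired at most once; even if they were allowed, they would only shrink the set. A policy that stops early with fewer than $q$ queries also gives a smaller set.

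For the consequence, requiring $(q+1)/K\ge 1-\varepsilon$ gives $q\ge(1-\varepsilon)K-1$. Since $q$ is an integer, $q\ge\lceil(1-\varepsilon)K-1\rceil=\lceil(1-\varepsilon)K\rceil-1$.

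The main obstacle is handling adaptivity and randomization rigorously within the access-admissible formalism. The key observation is that on the failure event the transcript is a deterministic function of the policy seed alone. This reduces the adaptive policy to a fixed guessing set, and the rest is routine.
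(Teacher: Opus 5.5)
Your proof is correct and follows essentially the same argument as the paper: querying all $M$ factors reveals $\boldsymbol\Theta$, and a candidate-level policy can succeed only if $\boldsymbol\Theta$ is among its at most $q$ distinct queries on the all-negative path or equals its final guess, which gives $(q+1)/K$. The paper writes this as the Bayes computation $q/K+\frac{K-q}{K}\cdot\frac{1}{K-q}$ and leaves adaptivity implicit, whereas your fixed all-zero-branch coupling, together with conditioning on the policy seed, handles the adaptive, randomized, and early-stopping cases explicitly.
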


\begin{proof}
Querying all $M$ coordinates reveals $\boldsymbol\Theta$ exactly and therefore
identifies $\mathbf y_{\boldsymbol\Theta}$. For candidate-level verification,
repeated queries are never useful. Consider first $q<K$ distinct queried
assignments. The probability that one of them is correct is $q/K$. If all
queries return zero, the posterior is uniform over the remaining $K-q$
assignments, so the best final guess succeeds with conditional probability
$1/(K-q)$. The total success probability is
\[
\frac{q}{K}
+
\frac{K-q}{K}\frac{1}{K-q}
=
\frac{q+1}{K}.
\]
For $q\ge K$, success is at most one. Solving
$(q+1)/K\ge1-\varepsilon$ gives the stated integer query lower bound.
\end{proof}

For direct comparison with a fixed query budget, the same candidate-query
bound can be written as
\begin{equation}
\label{eq:candidate_query_bound_main}
\min\!\left\{\frac{q+1}{K},1\right\}.
\end{equation}
The separation is a witness for possible compression through shared
coordinates; it does not assert that real VLM candidate pools form complete
binary codes.

\paragraph{Exact factor-code curves.}
For $K=2^M$, after $q\le M$ exact factor queries,
$2^{M-q}$ assignments remain possible. The optimal success probability is
\begin{equation}
\label{eq:factor_probe_success}
P_{\mathrm{factor}}(q)
=
\min\left\{\frac{2^q}{K},1\right\}.
\end{equation}
The candidate-level curve follows from
Theorem~\ref{thm:compressive_separation}:
\begin{equation}
\label{eq:candidate_probe_success}
P_{\mathrm{candidate}}(q)
=
\min\left\{\frac{q+1}{K},1\right\}.
\end{equation}
For $K=64$ and $M=6$, the first curve reaches one after six checks, whereas
the second requires 60 checks to exceed $95\%$ success.

\paragraph{Noisy-factor curve.}
Consider a separate model with $K=16$, $M=4$, and independent binary
symmetric channel observations
\[
O_j^{\mathrm{BSC}}
=
\Theta_j\oplus N_j,
\]
where $N_j\sim\operatorname{Bernoulli}(1-\rho)$ independently across $j$ and
independently of $\boldsymbol\Theta$. Thus
\[
\rho
=
\Prob(O_j^{\mathrm{BSC}}=\Theta_j)
\in[1/2,1].
\]
The full noisy transcript is
\[
T_{\mathrm{BSC}}
=
(O_1^{\mathrm{BSC}},\ldots,O_M^{\mathrm{BSC}}).
\]
One uniform bit sent through this channel contributes $1-H_b(\rho)$ bits, so
independence gives
\[
\MI_2(\boldsymbol\Theta;T_{\mathrm{BSC}})
=
M[1-H_b(\rho)].
\]
Because the one-hot utility vector $\mathbf U$ uniquely identifies
$\boldsymbol\Theta$ in this construction,
\begin{equation}
\label{eq:finite_capacity}
\MI_2(\mathbf U;T_{\mathrm{BSC}})
=
M[1-H_b(\rho)].
\end{equation}
The posterior mode is the observed code and is correct exactly when all
$M$ observed bits are correct. Relative to uniform-prior success $1/K$, its
gain is
\begin{equation}
\label{eq:finite_gain}
\Delta_{\mathrm{mode}}(\rho)
=
\rho^M-\frac{1}{K}.
\end{equation}
Both expressions vanish at $\rho=1/2$. The parameter $\rho$ here is a true
channel correctness probability and is distinct from the empirical
selection-discrimination index $\kappa$ in
Section~\ref{sec:boe_method}.

\subsection{Residual Evidence Capacity and Information Upper Laws}

Before round $t$, an access-admissible policy observes only the cheap context
and its purchased action--outcome history. Let $\Pi_C^{\mathrm{acc}}$ denote
the policies defined in Section~\ref{sec:problem} whose realized cost is at
most $C$ almost surely.

\begin{definition}[Evidence capacity]
\label{def:evidence_capacity}
For an adaptive policy $\pi$, let
\[
T_\pi=(E_1,O_1,\ldots,E_\tau,O_\tau)
\]
be its terminal action--outcome transcript. The residual evidence capacity is
\begin{equation}
\label{eq:evidence_capacity}
\Lambda_C
=
\sup_{\pi\in\Pi_C^{\mathrm{acc}}}
\MI(\mathbf U;T_\pi\mid F).
\end{equation}
\end{definition}

Conditioning on $F$ removes information already exposed through the cheap
interface. The transcript includes the adaptively selected actions as well as
their outcomes, so the supremum accounts for overlap and redundancy. Because
conditional mutual information is nonnegative, $\Lambda_C\ge0$.

For the finite noisy transcript above, conversion from bits to nats gives
\[
(\ln2)\MI_2(\mathbf U;T_{\mathrm{BSC}})
\le
\Lambda_C
\]
whenever its four probes are access-admissible and fit the budget. Equality
holds when those four unit-cost probes are the entire feasible menu and
$C=4$.

The access semantics are essential. If the complete raw record were included
in $F$ and each action outcome were only a deterministic function of that
record plus independent noise, then the transcript would add no conditional
information about $\mathbf U$ beyond $F$. The capacity definition therefore
applies to the costed view-access model in
Equation~\eqref{eq:access_channel}.

\subsubsection{Total-information upper law}

\begin{assumption}[Bernoulli candidate model]
\label{ass:bernoulli_candidates}
Before observing cheap context or evidence, the utility vector
$\mathbf U=(U_1,\ldots,U_K)$ has independent coordinates
$U_i\sim\operatorname{Bernoulli}(p_U)$. The cheap context satisfies
\begin{equation}
\label{eq:cheap_information}
\MI(\mathbf U;F)
\le
KJ_F.
\end{equation}
\end{assumption}

Equivalently, the main-text parameterization records the same assumption as
\[
\MI(\mathbf U;F)\le KJ_F,
\qquad J_F\ge0.
\]

\begin{theorem}[Evidence-capacity upper law]
\label{thm:upper_law}
Under Assumption~\ref{ass:bernoulli_candidates}, let an access-admissible,
budget-feasible policy observe $\mathcal Z=(F,T_\pi)$ and select a subset
$S(\mathcal Z)\subseteq\{1,\ldots,K\}$ of size $m$. Define
\begin{equation}
\label{eq:throughput_tm}
T_m
=
\E\left[\sum_{i\in S(\mathcal Z)}U_i\right].
\end{equation}
If $J_F$ and $\Lambda_C$ are measured in nats, then
\begin{equation}
\label{eq:upper_law}
T_m
\le
mp_U+
\sqrt{\frac{m}{2}\left(KJ_F+\Lambda_C\right)}.
\end{equation}
If they are measured in bits, the square-root term is multiplied by
$\sqrt{\ln2}$.
\end{theorem}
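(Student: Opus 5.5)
The plan is the standard information-theoretic (Russo--Zou style) selection bound: control $T_m$ by the mutual information between $\mathbf U$ and everything the selector sees, $\mathcal Z=(F,T_\pi)$. The argument has three steps.

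\textbf{Step 1: chain rule.} Split the total information as
\[
\MI(\mathbf U;\mathcal Z)=\MI(\mathbf U;F)+\MI(\mathbf U;T_\pi\mid F).
\]
Assumption~\ref{ass:bernoulli_candidates} bounds the first term by $KJ_F$. Definition~\ref{def:evidence_capacity} bounds the second by $\Lambda_C$, since $\pi\in\Pi_C^{\mathrm{acc}}$. Hence
\[
\MI(\mathbf U;\mathcal Z)\le KJ_F+\Lambda_C .
\]
If needed, policy randomization can be absorbed into $\mathcal Z$: the private seed is independent of the task, so including it adds nothing to this mutual information.

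\textbf{Step 2: transportation/Donsker--Varadhan bound.} Write $T_m-mp_U$ as the difference between the expectation of $g(\mathbf U,\mathcal Z)=\sum_{i\in S(\mathcal Z)}U_i$ under the joint law and under the product of marginals $P_{\mathbf U}\otimes P_{\mathcal Z}$. Under the product law, $\mathbf U$ is independent of $S$, so each selected coordinate has mean $p_U$ and the expectation is exactly $mp_U$.

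For a fixed set $S$, the function $\sum_{i\in S}U_i$ is a sum of $m$ independent $[0,1]$-valued variables. By Hoeffding's lemma it is $\sqrt{m/4}$-sub-Gaussian, i.e.\ it has variance proxy $\sigma^2=m/4$, uniformly in $S$.

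The key tool is the decoupling lemma: if $g(\mathbf U,z)$ is $\sigma^2$-sub-Gaussian under $P_{\mathbf U}$ for every $z$, then
\[
\E_{P}[g]-\E_{P_{\mathbf U}\otimes P_{\mathcal Z}}[g]\le\sqrt{2\sigma^2\,\MI(\mathbf U;\mathcal Z)}.
\]
To prove it, apply Donsker--Varadhan to the conditional laws $P_{\mathbf U\mid\mathcal Z=z}$ versus $P_{\mathbf U}$ for each $z$. Then average over $z$, and use Jensen on the square root together with $\E_z\,\mathrm{KL}(P_{\mathbf U\mid z}\|P_{\mathbf U})=\MI(\mathbf U;\mathcal Z)$.

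With $\sigma^2=m/4$ this gives
\[
T_m\le mp_U+\sqrt{\tfrac{m}{2}\,\MI(\mathbf U;\mathcal Z)},
\]
and Step 1 completes \eqref{eq:upper_law}.

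\textbf{Step 3: units.} If $J_F$ and $\Lambda_C$ are in bits, then $\MI_{\mathrm{nats}}=(\ln 2)\,\MI_{\mathrm{bits}}$. Substituting this pulls out the factor $\sqrt{\ln2}$.

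The main obstacle is the decoupling step. The difficulty is that $S$ depends on $\mathcal Z$, which is adaptive and random, so the sub-Gaussian bound must hold uniformly in $S$ and be applied conditionally on $\mathcal Z=z$ before averaging. There is also a measurability subtlety: the transcript has random length $\tau$. I would handle this by treating $T_\pi$ as a single random element of the countable union of finite-history spaces, which is legitimate because $\tau<\infty$ almost surely. Independence of the coordinates under $P_{\mathbf U}$ holds by assumption, which is exactly what gives the variance proxy $m/4$.
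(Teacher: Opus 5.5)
Your proof is correct and follows the paper's argument: Hoeffding's lemma for the $m/4$ variance proxy, Donsker--Varadhan plus Jensen to get $\sqrt{(m/2)\MI}$, and the chain rule with Assumption~\ref{ass:bernoulli_candidates} and Definition~\ref{def:evidence_capacity} to bound the total information by $KJ_F+\Lambda_C$. The only cosmetic difference is the conditioning variable: the paper conditions on the selected set $\widehat S=s$ and then uses data processing, $\MI(\mathbf U;\widehat S)\le\MI(\mathbf U;\mathcal Z)$, whereas you condition directly on $\mathcal Z=z$.
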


\begin{proof}
Let $\mathcal S_m$ be the collection of all size-$m$ subsets of
$\{1,\ldots,K\}$. For fixed $s\in\mathcal S_m$, define
\[
X_s
=
\sum_{i\in s}(U_i-p_U).
\]
By Hoeffding's lemma,
\[
\log\E\exp(\lambda X_s)
\le
\frac{m\lambda^2}{8}
\qquad
\text{for all }\lambda\in\mathbb R,
\]
so $X_s$ is $m/4$-sub-Gaussian.

Let $\widehat S=S(\mathcal Z)$ and, for every $s$ with
$\Prob(\widehat S=s)>0$, define
\[
D_s
=
D_{\mathrm{KL}}
\left(
P_{\mathbf U\mid\widehat S=s}
\,\Vert\,
P_{\mathbf U}
\right).
\]
The variational representation of relative entropy gives, for any
$\lambda>0$,
\begin{align}
\E[X_s\mid\widehat S=s]
&\le
\frac{D_s+\log\E\exp(\lambda X_s)}{\lambda}
\\
&\le
\frac{D_s}{\lambda}+\frac{m\lambda}{8}.
\end{align}
Optimizing over $\lambda$ yields
\begin{equation}
\label{eq:conditional_selection_bias}
\E[X_s\mid\widehat S=s]
\le
\sqrt{\frac{mD_s}{2}}.
\end{equation}
Averaging over $\widehat S$ and applying Jensen's inequality,
\begin{align}
\E[X_{\widehat S}]
&\le
\sum_s\Prob(\widehat S=s)\sqrt{\frac{mD_s}{2}}
\\
&\le
\sqrt{
\frac{m}{2}
\sum_s\Prob(\widehat S=s)D_s
}
\\
&=
\sqrt{\frac{m}{2}\MI(\mathbf U;\widehat S)}.
\end{align}
Since $\widehat S$ is a function of $\mathcal Z$, data processing gives
\[
\MI(\mathbf U;\widehat S)
\le
\MI(\mathbf U;\mathcal Z).
\]
By the chain rule and Definition~\ref{def:evidence_capacity},
\begin{align}
\MI(\mathbf U;\mathcal Z)
&=
\MI(\mathbf U;F)
+
\MI(\mathbf U;T_\pi\mid F)
\\
&\le
KJ_F+\Lambda_C.
\end{align}
Finally,
\[
T_m
=
mp_U+\E[X_{\widehat S}]
\le
mp_U+
\sqrt{\frac{m}{2}\left(KJ_F+\Lambda_C\right)}.
\]
The proof uses nats. If information is measured in bits, multiplying it by
$\ln2$ produces the factor $\sqrt{\ln2}$.
\end{proof}

The theorem can equivalently separate the acquisition policy from its
terminal selector. For any $\pi\in\Pi_C^{\mathrm{acc}}$, let
$\delta_m^{\mathrm{sel}}$ map $(F,T_\pi)$ to the size-$m$ set
\[
\widehat{\mathcal I}_{\pi,m}
=
\delta_m^{\mathrm{sel}}(F,T_\pi),
\]
and define
\[
\mathrm{EU}_m(\pi,\delta_m^{\mathrm{sel}})
=
\E\!\left[
\sum_{i\in\widehat{\mathcal I}_{\pi,m}}U_i
\right].
\]
Then the main-text form is
\begin{equation}
\label{eq:upper_law_main}
\mathrm{EU}_m(\pi,\delta_m^{\mathrm{sel}})
\le
mp_U+
\sqrt{\frac{m}{2}\left(KJ_F+\Lambda_C\right)}.
\end{equation}
This is a no-free-lunch upper bound; it does not assert that BoE attains the
right-hand side.

\subsubsection{Evidence-only gain}

\begin{assumption}[Conditional sub-Gaussian posterior]
\label{ass:conditional_subgaussian}
For every $F=f$ and every fixed subset $s$ of size $m$,
\[
X_s^{(f)}
=
\sum_{i\in s}
\left(
U_i-\E[U_i\mid F=f]
\right)
\]
is $m/4$-sub-Gaussian under $P_{\mathbf U\mid F=f}$. This holds, for
example, when the candidate utilities are conditionally independent Bernoulli
variables given $F=f$.
\end{assumption}

\begin{corollary}[Evidence-only gain bound]
\label{cor:evidence_only_gain}
Under Assumption~\ref{ass:conditional_subgaussian}, define
\[
V_m(F)
=
\max_{|s|=m}
\E\left[
\sum_{i\in s}U_i
\;\middle|\;
F
\right]
\]
and
\[
V_m(F,T_\pi)
=
\max_{|s|=m}
\E\left[
\sum_{i\in s}U_i
\;\middle|\;
F,T_\pi
\right].
\]
For every access-admissible, budget-feasible policy,
\begin{boeanchor}{Formal evidence-only ceiling}
\begin{equation}
\label{eq:evidence_only_gain}
\E\left[V_m(F,T_\pi)-V_m(F)\right]
\le
\sqrt{\frac{m}{2}\Lambda_C}.
\end{equation}
\small
The displayed form uses nats. In bits, the right-hand side is multiplied by
$\sqrt{\ln2}$.
\end{boeanchor}
\end{corollary}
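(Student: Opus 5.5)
The plan is to rerun the proof of Theorem~\ref{thm:upper_law} conditionally on the cheap context, with the prior $P_{\mathbf U\mid F=f}$ in place of $P_{\mathbf U}$ and the centering $\E[U_i\mid F=f]$ in place of $p_U$. The sub-Gaussian step that Theorem~\ref{thm:upper_law} took from Hoeffding's lemma is now supplied by Assumption~\ref{ass:conditional_subgaussian}.

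\textbf{Step 1: reduce the gain to a selection bias.} Let $\widehat S=S^\star(F,T_\pi)$ be a maximizer in the definition of $V_m(F,T_\pi)$, chosen with a fixed deterministic tie-breaking rule. Since there are finitely many size-$m$ subsets, $\widehat S$ is a measurable, finite-valued function of $(F,T_\pi)$. By the tower property,
\[
\E[V_m(F,T_\pi)\mid F=f]=\E\Bigl[\textstyle\sum_{i\in\widehat S}U_i\,\Big|\,F=f\Bigr].
\]
For every fixed $s$ we have $V_m(f)\ge\sum_{i\in s}\E[U_i\mid F=f]$. Applying this with $s=\widehat S$ pointwise and averaging gives
\[
\E[V_m(F,T_\pi)-V_m(F)\mid F=f]\le \E\bigl[X^{(f)}_{\widehat S}\mid F=f\bigr].
\]

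\textbf{Step 2: a conditional Donsker--Varadhan bound.} Fix $f$, and fix $s$ with $\Prob(\widehat S=s\mid F=f)>0$. Define
\[
D_s^{(f)}=D_{\mathrm{KL}}\bigl(P_{\mathbf U\mid F=f,\widehat S=s}\,\Vert\,P_{\mathbf U\mid F=f}\bigr).
\]
The variational representation of relative entropy, combined with the $m/4$-sub-Gaussian property of $X^{(f)}_s$ from Assumption~\ref{ass:conditional_subgaussian}, gives
\[
\E[X_s^{(f)}\mid F=f,\widehat S=s]\le\frac{D_s^{(f)}}{\lambda}+\frac{m\lambda}{8}
\]
for every $\lambda>0$. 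Optimizing over $\lambda$ yields the bound $\sqrt{mD_s^{(f)}/2}$. Averaging over $s$ and applying Jensen's inequality to the concave square root gives
\[
\E\bigl[X^{(f)}_{\widehat S}\mid F=f\bigr]\le\sqrt{\tfrac m2\,\MI(\mathbf U;\widehat S\mid F=f)}.
\]
Since $\widehat S$ is a function of $(f,T_\pi)$ once $F=f$ is fixed, data processing gives $\MI(\mathbf U;\widehat S\mid F=f)\le\MI(\mathbf U;T_\pi\mid F=f)$.

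\textbf{Step 3: average over $F$.} Take the expectation over $F$ and apply Jensen's inequality once more:
\[
\E\bigl[V_m(F,T_\pi)-V_m(F)\bigr]\le\E\sqrt{\tfrac m2\MI(\mathbf U;T_\pi\mid F=F)}\le\sqrt{\tfrac m2\MI(\mathbf U;T_\pi\mid F)}.
\]
This is the first inequality of Equation~\eqref{eq:main_evidence_bound}. The second follows because $\pi\in\Pi_C^{\mathrm{acc}}$, so $\MI(\mathbf U;T_\pi\mid F)\le\Lambda_C$ by Definition~\ref{def:evidence_capacity}. For the unit convention, the argument is carried out in nats; if information is measured in bits, the right-hand side picks up the factor $\sqrt{\ln2}$.

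The step I expect to need the most care is the measure-theoretic justification in Step 2. It requires regular conditional distributions $P_{\mathbf U\mid F=f}$ and $P_{\mathbf U\mid F=f,\widehat S=s}$, together with the identity
\[
\E_F\Bigl[\textstyle\sum_s\Prob(\widehat S=s\mid F)\,D_s^{(F)}\Bigr]=\MI(\mathbf U;\widehat S\mid F).
\]
Because $\mathbf U\in[0,1]^K$ is a standard Borel variable and $\widehat S$ is finite-valued, both points follow from standard disintegration. Assumption~\ref{ass:conditional_subgaussian} holds for every $f$, so it can be applied inside the conditioning without any null-set issues.
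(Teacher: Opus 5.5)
Your proposal is correct and follows essentially the same route as the paper. You bound the gain by the conditional selection bias $\E[X^{(f)}_{\widehat S}\mid F=f]$ of the posterior-optimal set, run the Donsker--Varadhan/sub-Gaussian argument of Theorem~\ref{thm:upper_law} conditionally on $F=f$, pass to $T_\pi$ by data processing, and then average over $F$ with Jensen's inequality and the definition of $\Lambda_C$; the only difference is that you write out the conditional variational step and the measurability details that the paper cites by reference.
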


\begin{proof}
Fix $F=f$ and let
\[
S^\star(f,T_\pi)
\in
\argmax_{|s|=m}
\E\left[
\sum_{i\in s}U_i
\;\middle|\;
F=f,T_\pi
\right].
\]
Write
\[
\mu_s(f)
=
\E\left[
\sum_{i\in s}U_i
\;\middle|\;
F=f
\right].
\]
Because $\mu_{S^\star}(f)\le\max_s\mu_s(f)=V_m(f)$,
\begin{align}
&\E\left[
V_m(f,T_\pi)-V_m(f)
\;\middle|\;
F=f
\right]
\\
&\le
\E\left[
\sum_{i\in S^\star}U_i-\mu_{S^\star}(f)
\;\middle|\;
F=f
\right].
\end{align}
Conditionally on $F=f$, the information-selection argument from
Theorem~\ref{thm:upper_law} gives
\begin{align}
&\E\left[
V_m(f,T_\pi)-V_m(f)
\;\middle|\;
F=f
\right]
\\
&\le
\sqrt{
\frac{m}{2}
\MI(\mathbf U;S^\star\mid F=f)
}
\\
&\le
\sqrt{
\frac{m}{2}
\MI(\mathbf U;T_\pi\mid F=f)
},
\end{align}
where the second inequality follows by conditional data processing because
$S^\star$ is a function of $(f,T_\pi)$.

Averaging over $F$ and applying Jensen's inequality,
\begin{align}
\E[V_m(F,T_\pi)-V_m(F)]
&\le
\E_F
\sqrt{
\frac{m}{2}
\MI(\mathbf U;T_\pi\mid F=f)
}
\\
&\le
\sqrt{
\frac{m}{2}
\MI(\mathbf U;T_\pi\mid F)
}
\\
&\le
\sqrt{\frac{m}{2}\Lambda_C}.
\end{align}
The bit-valued form follows by multiplying mutual information by $\ln2$.
\end{proof}

For the terminal-history notation used in Section~\ref{sec:theory}, define
\[
V_m(f)
=
\max_{\substack{\mathcal J\subseteq\{1,\ldots,K\}\\|\mathcal J|=m}}
\E\!\left[
\sum_{i\in\mathcal J}U_i
\mid F=f
\right]
\]
and, for $P_{F,T_\pi}$-almost every $(f,\mathsf t)$,
\[
V_m^\pi(f,\mathsf t)
=
\max_{\substack{\mathcal J\subseteq\{1,\ldots,K\}\\|\mathcal J|=m}}
\E\!\left[
\sum_{i\in\mathcal J}U_i
\mid F=f,T_\pi=\mathsf t
\right].
\]
Writing $V_m(F)$ and $V_m^\pi(F,T_\pi)$ for the corresponding random
evaluations, the top-one notation of Section~\ref{sec:boe_method} satisfies
\[
V_{\mathrm{sel}}(f,\mathsf t)
=
V_1^\pi(f,\mathsf t)
\]
at a terminal transcript. Equation~\eqref{eq:main_evidence_bound} is the
corresponding two-step information ceiling.

\subsection{Relation to Candidate-Level JaKoB Verification}

Suppose every action verifies one complete candidate, each action has unit
cost, and each nonredundant verification contributes the same conditional
information $I_{\mathrm{ver}}$. A budget of $B$ such actions gives
\begin{equation}
\label{eq:homogeneous_capacity}
\Lambda_C
=
BI_{\mathrm{ver}}.
\end{equation}
With the normalization $I_{\mathrm{ver}}=1$, the homogeneous JaKoB form becomes
\begin{equation}
\label{eq:jakob_capacity_special}
Bp+\sqrt{JKB}
=
p\Lambda_C+\sqrt{JK\Lambda_C}.
\end{equation}
This identity relies on homogeneous, nonredundant, candidate-level checks.
General BoE actions can have unequal costs, unequal discrimination,
overlapping information, and shared signed effects. The product form is
therefore used only as a candidate-level special-case guide, not as a general
law for reusable local evidence.

\section{Experimental Details and Additional Diagnostics}
\label{app:experimental_details}

\subsection{Common Ledger and Policy Definitions}

For each question, candidate generation and all potential evidence calls are
performed once. The resulting offline ledger stores the candidate pool,
canonical claims, signed candidate--factor graph, channel identities, action
costs, evidence outcomes, and evaluation utilities. Evaluation utilities are
hidden from deployable replay policies; they are used only for scoring and for
the label-guided diagnostic. The evidence judge returns observations and does
not rank candidates.

The main $C=16$ comparison contains five policies. Raw BoN returns the
zero-evidence majority winner. Random factor acquisition uses the same factor
representation and score update as BoE but samples only factor actions.
Whole-response judging purchases complete-candidate judgments. BoE ranks all
available actions by the plug-in value in
Equation~\eqref{eq:plugin_evsi}. The myopic label-guided allocator uses
evaluation labels at each step to reveal the factor that most improves the
current selection. It is non-deployable, factor-only, and neither globally
optimal nor an upper bound.

The main BoE menu includes whole-response actions, whereas random acquisition
is factor-only. Their difference is therefore an unmatched-menu descriptive
policy gap. A factor-only BoE replay on MedXpertQA-MM, reported below, matches
the random baseline's action menu.

All four main cells use Qwen3-VL-30B-A3B for candidate generation and
Qwen3-VL-235B-A22B for evidence judgment. We draw $K=16$ candidates at
temperature $1.1$ and replay budgets $C\in\{1,2,4,8,16\}$. The costs
$1$, $4$, and $8$ are synthetic design costs. We did not measure wall-clock,
token, or monetary costs, and the retained package does not contain a
cost-ratio sensitivity analysis.

\subsection{Dataset Protocols}

\paragraph{SLAKE.}
The SLAKE cell contains 645 open-answer questions and uses the earlier
8B-generator/30B-judge free-factor pipeline. It does not use the fixed
five-slot battery and is retained only as a historical comparison; it is not
part of the four-cell main benchmark.

\paragraph{VQA-Med.}
VQA-Med provides modality, plane, organ, and abnormality questions. Because
the available local copy contains only the training partition, we construct a
deterministic evaluation subset of 2,334 questions. The structured modality,
anatomy, and view vocabularies make this the most verifier-rich open-answer
cell.

\paragraph{PathVQA.}
PathVQA contributes 9,903 open pathology questions after removing binary
yes/no examples. Pathology images frequently do not support radiology-oriented
view or grading slots, so those slots often return $\varnothing$ or have weak
fitted discrimination. This cell measures transfer of the radiology-oriented
battery to pathology images.

\paragraph{PMC-VQA.}
We sample 10,000 questions and hide the original multiple-choice options from
the generator. The correct option text becomes the open-answer reference.
Items whose reference is an option-specific placeholder such as ``none of the
above'' or ``cannot determine'' are filtered. This protocol removes the
option-term evidence channel and is not an official open-answer split.

\paragraph{MedXpertQA-MM.}
We use the 2,000-question multimodal test set. Every question is a
single-answer, five-way MCQ with between one and six images. Each factor carries
a figure identifier; grounded factor actions inspect the corresponding image,
whereas the whole-response judge receives all images. Candidate correctness is
option-count-aware exact matching over A--E. Raw BoN selects an incorrect
candidate on 1,333 questions.

One MedXpertQA-MM record has no valid parsed candidate. The raw policy still
selects its fallback candidate, which is evaluated as incorrect, so this item
is included in the policy-aligned \textsc{MajorityWrong} set. We use 1,333
throughout because it exactly matches the failures of the evaluated raw policy.

\begin{table}[H]
\centering
\small
\caption{\textbf{Detailed candidate-generation statistics.}}
\label{tab:generation_health_appendix}
\resizebox{\linewidth}{!}{
\begin{tabular}{lrrrrrr}
\toprule
Dataset
& Questions
& $K$
& Candidate rows
& Parsed (\%)
& Answer (\%)
& Oracle@$K$ \\
\midrule
SLAKE           & 645    & 16 & 10,320  & 90.5 & 98.7 & 0.588 \\
VQA-Med         & 2,334  & 16 & 37,344  & 95.1 & 97.7 & 0.821 \\
PathVQA         & 9,903  & 16 & 158,448 & 94.0 & 99.1 & 0.615 \\
PMC-VQA         & 10,000 & 16 & 160,000 & 88.2 & 98.5 & 0.651 \\
MedXpertQA-MM   & 2,000  & 16 & 32,000  & 96.2 & 94.5 & 0.647 \\
\bottomrule
\end{tabular}}
\end{table}

\subsection{Evidence Channels}

The open-answer battery contains modality, anatomical region, view or plane,
primary finding, and one answer-relevant attribute. The value $\varnothing$
is a first-class outcome: it marks an inapplicable or unresolved slot and
contributes zero score increment. Canonicalized claims retain candidate stance.
For MedXpertQA-MM, a factor also records its figure identifier.

\begin{table}[H]
\centering
\small
\caption{\textbf{Evidence channels used in the current experiments.}}
\label{tab:evidence_channel_costs}
\begin{tabular}{llll}
\toprule
Channel & Typical observation & Cost & Score role \\
\midrule
Structured / metadata
& Vocabulary, option, or metadata check
& 1
& Channel discrimination weight \\
VLM factor
& Whole-image factor verdict
& 4
& Channel discrimination weight \\
Grounded VLM
& Figure-specific factor verdict
& 4
& Grounded discrimination weight \\
Whole response
& Complete-candidate verdict
& 8
& Candidate discrimination weight \\
\bottomrule
\end{tabular}
\end{table}

Grounding metadata, masks, bounding boxes, and crop quality determine what the
judge sees but do not enter the selection score as independent evidence.
Grounded and whole-image VLM judgments remain separate calibration groups.

\subsection{Open-Answer Evaluation and Channel Calibration}

For open answers, deterministic evaluation first applies normalized string
matching, substring matching, and numerical tolerance. Remaining candidate
answers are graded for semantic equivalence against the reference answer.
Per-example evaluation utilities are not revealed during deployable policy
replay. The open-answer evaluator and evidence judge use the same model tier,
so the open-answer cells do not provide an independent-evaluator test;
whole-response results are consequently diagnostic. MedXpertQA-MM uses exact
A--E matching and does not require an LLM correctness evaluator.

For dataset--channel group $g$, the selection-side discrimination index is
\begin{equation}
\label{eq:kappa_calibration_appendix}
\kappa_g
=
0.5+
\frac{1}{2}
\left[
\Prob(U=1\mid\mathrm{support},g)
-
\Prob(U=1\mid\mathrm{contradict},g)
\right].
\end{equation}
This quantity is neither per-factor truth accuracy nor a likelihood-ratio
parameter. It measures whether support from group $g$ makes an affected
candidate more likely to be correct. An index of $0.5$ yields zero score
weight, while an index below $0.5$ reverses the apparent update direction.
Each action inherits $\kappa(e)=\kappa_{\gamma(e)}$.

The fitted indices and categorical outcome frequencies are estimated from the
same ledger used for replay. The latter define
$\widehat p_e(o)=\widehat p_{\gamma(e)}(o)$ in
Equation~\eqref{eq:plugin_evsi}. The explicitly fixed Cheap values are not
fitted. This protocol supports a mechanism analysis, not a held-out
calibration estimate; deployment-oriented evaluation would require a separate
calibration split or cross-fitting.

\begin{table}[H]
\centering
\small
\caption{\textbf{Complete assigned channel-discrimination index table.}
The $s1$, $s3$, and $s5$ columns are populated fixed-battery structured
channels. ``n/a'' means that the channel is absent. The starred Cheap value is
a hand-set index in the open-answer runs.}
\label{tab:full_reliability}
\resizebox{\linewidth}{!}{
\begin{tabular}{lrrrrrrr}
\toprule
Dataset
& Cheap $\kappa_g$
& VLM $\kappa_g$
& Grounded $\kappa_g$
& Verifier $s1$
& Verifier $s3$
& Verifier $s5$
& Whole $\kappa_g$ \\
\midrule
SLAKE
& $0.55^\ast$ & 0.593 & 0.612 & n/a   & n/a   & n/a   & 0.670 \\
VQA-Med
& $0.55^\ast$ & 0.539 & 0.505 & 0.624 & 0.522 & 0.536 & 0.715 \\
PathVQA
& $0.55^\ast$ & 0.518 & 0.534 & 0.517 & 0.310 & 0.275 & 0.608 \\
PMC-VQA
& $0.55^\ast$ & 0.502 & 0.522 & 0.534 & 0.512 & 0.377 & 0.623 \\
MedXpertQA-MM
& 0.505        & 0.514 & 0.526 & n/a   & n/a   & n/a   & 0.642 \\
\bottomrule
\end{tabular}}
\end{table}

The MedXpertQA-MM implementation carries unused hand-set defaults for battery
slots that do not exist in its MCQ protocol. We report these slots as ``n/a''
because they do not enter its score updates.

\subsection{Policy-Specific Score-Proxy Diagnostics}

Equation~\eqref{eq:sharpening_score} is evaluated separately for each replay
policy. Its columns are not additive channel components, and negative values
are permitted because the underlying candidate scores are uncalibrated.

\begin{table}[H]
\centering
\small
\caption{\textbf{Policy-specific entropy-shaped score proxy at $C=16$.}
Values are dimensionless means of
$\mathsf{Sharp}_{d,C}^{\pi}$ under separate replay policies. The
label-guided factor column is a non-deployable diagnostic and has no
upper-bound interpretation.}
\label{tab:full_sharpening}
\resizebox{\linewidth}{!}{
\begin{tabular}{lrrrrr}
\toprule
Dataset
& BoE policy
& Whole response
& Grounded factor
& Label-guided factor
& Random factor \\
\midrule
VQA-Med
& +1.025 & +0.653 & +0.011 & +0.193 & +0.280 \\
SLAKE
& +0.607 & +0.389 & $-0.003$ & +0.142 & +0.235 \\
MedXpertQA-MM
& +1.006 & +0.799 & +0.021 & +0.015 & +0.021 \\
PMC-VQA
& $-0.276$ & $-0.202$ & $-0.020$ & +0.027 & $-0.006$ \\
PathVQA
& $-0.737$ & $-0.617$ & $-0.116$ & +0.297 & +0.092 \\
\bottomrule
\end{tabular}}
\end{table}

On PathVQA, the label-guided factor diagnostic has a positive proxy value while
the realized BoE update does not. On MedXpertQA-MM, whole-response replay has
a large positive proxy value although its accuracy is nearly unchanged from
raw majority. These cases show why $\mathsf{Sharp}_{d,C}^{\pi}$ should not be
read as information or accuracy gain.

\subsection{Paired Inference and Denominator Integrity}

The main text reports question-level paired-bootstrap $95\%$ intervals and
exact two-sided McNemar tests for BoE versus random factor acquisition. The
retained summaries do not preserve the bootstrap resample count, interval
construction, or random seed. They are therefore conditional fixed-ledger
summaries rather than independently reproducible estimates. The analyses are
exploratory, uncorrected for multiple comparisons, condition on one
candidate-generation seed, and omit candidate-pool and calibration-estimation
variability.

The full-set accuracy change is reconstructed from question-level transitions:
\begin{equation}
\label{eq:paired_accuracy_identity}
\Delta\mathrm{Acc}
=
\frac{
N(\text{raw wrong, BoE correct})
-N(\text{raw correct, BoE wrong})
}{N}.
\end{equation}
This identity keeps corrections and harmful flips on a common denominator;
subtracting separately normalized conditional rates is not an accuracy change.

\begin{table}[H]
\centering
\small
\caption{\textbf{Integer reconstruction of BoE--raw accuracy at $C=16$.}
Corrections are raw-wrong/BoE-correct transitions; harms are
raw-correct/BoE-wrong transitions.}
\label{tab:transition_identity}
\begin{tabular}{lrrrrrr}
\toprule
Dataset & $N$ & Raw correct & Raw wrong & Corrections & Harms
& $\Delta$Acc (pp) \\
\midrule
VQA-Med       & 2,334  & 1,478 & 856   & 38  & 32  & +0.26 \\
PathVQA       & 9,903  & 2,900 & 7,003 & 208 & 165 & +0.43 \\
PMC-VQA       & 10,000 & 3,641 & 6,359 & 240 & 182 & +0.58 \\
MedXpertQA-MM & 2,000  & 667   & 1,333 & 20  & 12  & +0.40 \\
\bottomrule
\end{tabular}
\end{table}

For MedXpertQA-MM, the 2,000 questions decompose into 667 raw-correct
questions, 627 raw-wrong questions whose pool contains a correct candidate,
and 706 raw-wrong questions whose pool does not. BoE repairs 20 of the 627
fixable failures and harms 12 of the 667 raw-correct questions, yielding
$(20-12)/2000=+0.40$ percentage points. Random factor, whole-response, and the
myopic label-guided allocator repair 9, 6, and 17 of the 1,333 raw failures,
respectively.

\subsection{Matched-Menu and Budget Diagnostics on MedXpertQA-MM}

The main BoE policy can purchase whole-response judgments, whereas random
acquisition is factor-only. A separate replay removes whole-response actions
from BoE so that the action menus match.

\begin{table}[H]
\centering
\small
\caption{\textbf{MedXpertQA-MM action-menu comparison at $C=16$.}
The menu-matched replay comparison is BoE factor-only versus random factor.}
\label{tab:medxpert_matched_menu}
\begin{tabular}{lrr}
\toprule
Policy & Accuracy (\%) & Gap from random factor (pp) \\
\midrule
Raw BoN          & 33.35 & --- \\
Random factor    & 33.40 & 0.00 \\
BoE factor-only  & 33.55 & +0.15 \\
BoE full menu    & 33.75 & +0.35 \\
\bottomrule
\end{tabular}
\end{table}

The available matched-menu gap is only three questions and has no reported
paired interval. It should therefore be treated as descriptive. The full-menu
gap cannot be interpreted as a pure routing effect.

The existing ledger also supports a budget sweep without regenerating
candidates. Table~\ref{tab:medxpert_budget} shows no monotone increase in the
BoE--random gap. Corrections and harms refer to BoE versus raw BoN.

\begin{table}[H]
\centering
\small
\caption{\textbf{MedXpertQA-MM fixed-ledger budget sweep.}
Accuracies and gaps are percentages and percentage points, respectively.}
\label{tab:medxpert_budget}
\begin{tabular}{rrrrrrrr}
\toprule
$C$ & Raw BoN & BoE & Random & BoE--random & Corrections & Harms
& BoE--raw \\
\midrule
1  & 33.35 & 33.35 & 33.30 & +0.05 & 5  & 5  & +0.00 \\
2  & 33.35 & 33.30 & 33.20 & +0.10 & 6  & 7  & $-0.05$ \\
4  & 33.35 & 33.55 & 33.20 & +0.35 & 9  & 5  & +0.20 \\
8  & 33.35 & 33.45 & 33.05 & +0.40 & 15 & 13 & +0.10 \\
16 & 33.35 & 33.75 & 33.40 & +0.35 & 20 & 12 & +0.40 \\
\bottomrule
\end{tabular}
\end{table}

Across $C\in\{1,2,4,8,16\}$, BoE--random remains at or below $0.40$ points
and does not grow monotonically. Together with the $C=16$ paired interval in
Table~\ref{tab:paired_uncertainty}, this does not support a stable
MedXpertQA-MM allocation gain.

\end{document}